\documentclass{article}
\usepackage[preprint]{neurips_2026}

\usepackage[utf8]{inputenc}
\usepackage[T1]{fontenc}
\usepackage{microtype}
\usepackage{nicefrac}
\usepackage{enumitem}

\usepackage{amsmath, amssymb, amsthm, mathtools}
\usepackage{amsfonts}
\usepackage{bm}

\usepackage{algorithm}
\usepackage{algpseudocode}
\algrenewcommand\algorithmicrequire{\textbf{Input:}}
\algrenewcommand\algorithmicensure{\textbf{Output:}}

\usepackage{graphicx}
\usepackage{booktabs}
\usepackage{wrapfig}

\usepackage{hyperref}
\usepackage{url}
\usepackage[capitalize, noabbrev]{cleveref}

\usepackage{xcolor}

\newcommand*\diff{\mathop{}\!\mathrm{d}}

\newcommand{\R}{\mathbb{R}}
\newcommand{\E}{\mathbb{E}}
\newcommand{\bbP}{\mathbb{P}}

\newcommand{\vx}{\bm{x}}
\newcommand{\vy}{\bm{y}}

\newcommand{\mA}{\bm{A}}

\newcommand{\mD}{\bm{D}}

\newcommand{\KL}[2]{D_{\mathrm{KL}}\!\left(#1\;\|\;#2\right)}
\newcommand{\Normal}[2]{\mathcal{N}\!\left(#1,\,#2\right)}
\newcommand{\Poisson}[1]{\mathrm{Poisson}(#1)}

\newcommand{\levy}{\nu}           
\newcommand{\tlevy}{\tilde{\nu}}  

\newcommand{\genP}{\mathcal{L}^P} 
\newcommand{\genQ}{\mathcal{L}^Q} 

\newtheorem{thm}{Theorem}[section]

\newtheorem{prop}[thm]{Proposition}
\newtheorem{cor}[thm]{Corollary}
\newtheorem{dfn}[thm]{Definition}
\newtheorem{remark}[thm]{Remark}

\title{Variational Inference for L\'{e}vy Process-Driven SDEs\\via Neural Tilting}

\author{
\textbf{Yaman K{\i}ndap}\textsuperscript{1},
\textbf{Manfred Opper}\textsuperscript{2},
\textbf{Benjamin Dupuis}\textsuperscript{3},
\textbf{Umut \c{S}im\c{s}ekli}\textsuperscript{3},
\textbf{Tolga Birdal}\textsuperscript{1}
\\[0.75em]
\small
\textsuperscript{1}Imperial College London, UK
\quad
\textsuperscript{2}Technical University of Berlin, Germany
\\
\small
\textsuperscript{3}INRIA, CNRS, Département d’Informatique de l’Ecole Normale Supérieure / PSL, France
}

\renewcommand{\paragraph}[1]{{\vspace{0.3mm}\noindent \bf #1}.}

\crefname{figure}{Fig.}{Figs.}
\Crefname{figure}{Fig.}{Figs.}
\crefname{table}{Tab.}{Tabs.}
\Crefname{table}{Tab.}{Tabs.}
\crefname{section}{Sec.}{Secs.}
\Crefname{section}{Sec.}{Secs.}
\crefname{equation}{Eq.}{Eqs.}
\Crefname{equation}{Eq.}{Eqs.}
\crefname{theorem}{Thm.}{Thms.}
\Crefname{theorem}{Thm.}{Thms.}
\crefname{remark}{Rem.}{Rems.}
\Crefname{remark}{Rem.}{Rems.}
\crefname{dfn}{Dfn.}{Dfns.}
\Crefname{dfn}{Dfn.}{Dfns.}
\crefname{algorithm}{Alg.}{Algs.}
\Crefname{algorithm}{Alg.}{Algs.}
\crefname{corollary}{Cor.}{Cors.}
\Crefname{corollary}{Corollary}{Corollaries}

\begin{document}

\maketitle

\vspace{-5mm}
\begin{abstract}
Modelling extreme events and heavy-tailed phenomena is central to building reliable predictive systems in domains such as finance, climate science, and safety-critical AI. While L\'{e}vy processes provide a natural mathematical framework for capturing jumps and heavy tails, Bayesian inference for L\'{e}vy-driven stochastic differential equations (SDEs) remains intractable with existing methods: Monte Carlo approaches are rigorous but lack scalability, whereas neural variational inference methods are efficient but rely on Gaussian assumptions that fail to capture discontinuities. We address this tension by introducing a neural exponential tilting framework for variational inference in L\'{e}vy-driven SDEs. Our approach constructs a flexible variational family by exponentially reweighting the L\'{e}vy measure using neural networks. This parametrization preserves the jump structure of the underlying process while remaining computationally tractable. To enable efficient inference, we develop a quadratic neural parametrization that yields closed-form normalization of the tilted measure, a conditional Gaussian representation for stable processes that facilitates simulation, and symmetry-aware Monte Carlo estimators for scalable optimization. Empirically, we demonstrate that the method accurately captures jump dynamics and yields reliable posterior inference in regimes where Gaussian-based variational approaches fail, on both synthetic and real-world datasets.
\end{abstract}

\section{Introduction}
\label{sec:intro}
Real-world stochastic systems, from financial markets to climate dynamics and
safety-critical AI, generate observations exhibit sudden discontinuities, asymmetric shocks, and
extreme events whose probability decays as a power law $x^{-\alpha}$ rather than
exponentially. This heavy-tailed structure is a defining property of many consequential
phenomena, not an anomalous departure from Gaussianity. Models that approximate it with
Gaussian noise are therefore systematically miscalibrated at the extremes, where the
cost of error is highest. The natural mathematical language for this full class of
stochastic behaviour is the theory of \emph{infinitely divisible
distributions} and associated L\'{e}vy processes, which form the complete
family of processes with independent and stationary increments~\citep{gnedenko1968limit,
ken1999levy}. Providing tractable approximate Bayesian inference over latent trajectories in L\'{e}vy process-driven
stochastic differential equations (SDEs) would therefore yield a principled,
uncertainty-aware framework for modelling temporal dynamics across any domain where the data-generating process departs from Gaussianity.

Neural SDE and ODE approaches have transformed continuous-time sequence modelling by
providing end-to-end differentiable, scalable frameworks competitive on standard
benchmarks~\citep{chen2018neural, tzen2019neural, kidger2020neural,
opper2019variational, daems2025efficient}, with recent extensions such as Neural Jump
SDEs~\citep{jia2019neural} and Neural MJD~\citep{gao2025neuralmjd} explicitly
incorporating discontinuities and achieving strong results on time series with abrupt
changes. Deep time-series forecasters, including DeepAR, DLinear, and
N-HiTS~\citep{salinas2020deepar, zeng2023transformers, challu2023nhits}, achieve
state-of-the-art performance on standard benchmarks through flexible, data-driven
parametrizations without any differential equation structure. Yet all of these methods
share a fundamental limitation. They model noise, including jump magnitudes, with
light-tailed distributions and fit their parameters by
maximum likelihood. This commits each model to a parametric noise structure calibrated
to average behaviour, yielding predictions that are systematically overconfident at
extreme quantiles, the failure mode that heavy-tailed phenomena directly expose, as shown in \cref{fig:synthetic_posterior}.

\begin{figure*}[t]
\centering
\includegraphics[width=\textwidth]{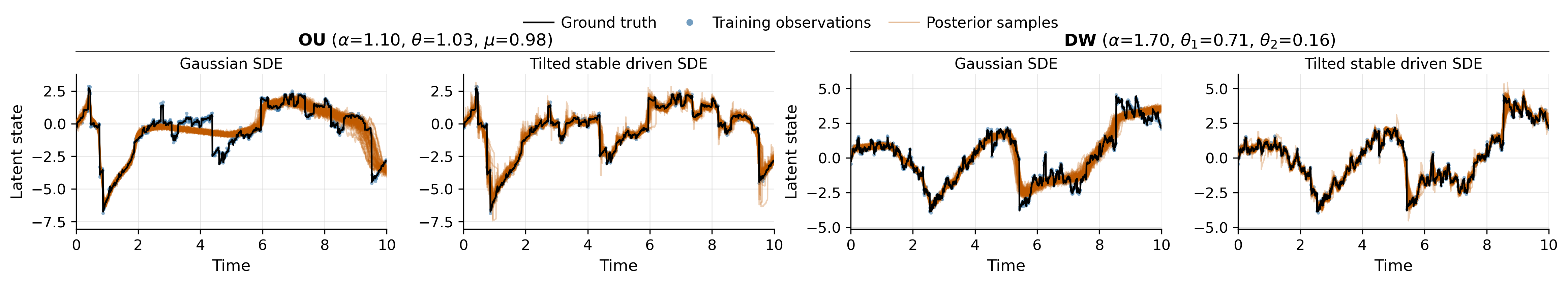}
\includegraphics[width=\textwidth]{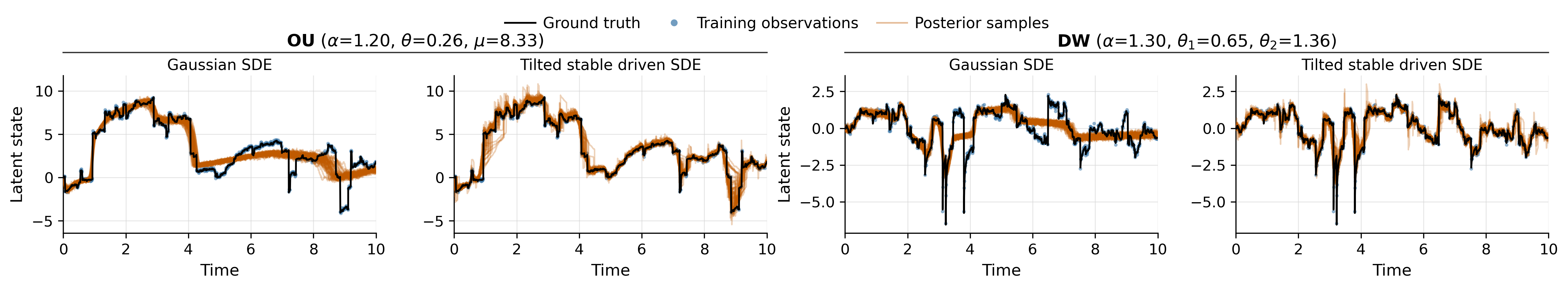}
\vspace{-5mm}
\caption{Posterior sample paths for two representative realisations of the OU (left pair) and double-well (right pair) systems, comparing the Gaussian SDE and our tilted-stable model.}
\label{fig:synthetic_posterior}
\vspace{-5mm}
\end{figure*}

This leaves a critical gap: no tractable framework exists for posterior inference over the latent dynamics of a L\'{e}vy-driven SDE, leaving jump structure as a fixed parametric assumption rather than a latent quantity to be inferred from data. Existing variational inference (VI) methods for stochastic processes obtain tractable path-space objectives through Brownian change-of-measure formulas, but these cannot produce posterior inference over the L\'{e}vy measure. Continuous variational processes are singular with respect to jump processes, and Brownian drift corrections in jump-diffusions leave the jump law unchanged. We close this gap by learning a neural exponential tilt of the prior L\'{e}vy measure, preserving the end-to-end differentiability and scalability of neural methods while yielding a tractable variational posterior with explicit jump and drift contributions to the ELBO.
\textbf{Our contributions are}:
\begin{itemize}[noitemsep,topsep=0em,leftmargin=*]
  \item A \textbf{path-space VI framework for L\'{e}vy-driven SDEs}
        that derives the optimal Markov variational posterior as an exponential tilt of
        the prior L\'{e}vy measure. 
  \item \textbf{Neural Tilting}: a quadratic neural parametrisation of the tilting
        function that preserves flexible state- and time-dependent posterior jump
        behaviour while yielding closed-form normalising constants.
  \item \textbf{Efficient simulation and ELBO optimisation algorithms} that exploit the
        conditionally Gaussian representation of stable processes and the symmetry of
        their L\'{e}vy measures.
  \item Empirical validation on synthetic and real-world datasets showing that the
        framework recovers heavy-tailed jump dynamics and improves tail calibration where
        Gaussian baselines fail.
\end{itemize}
Our source code will be made publicly available under: \href{https://circle-group.github.io/research/NeuralTilting/}{circle-group.github.io/research/NeuralTilting}.
\vspace{-3mm}
\section{Related work}
\label{sec:relwork}
\vspace{-3mm}

\paragraph{Simulation and data-driven methods for L\'{e}vy systems}
Numerical methods for L\'{e}vy-driven SDEs build on time-discretisation schemes~\cite{protter1997euler, fournier2011simulation}, with variance reduction via multilevel Monte Carlo~\cite{DEREICH20111565, jasra2019multilevel}, adaptive grids~\cite{kohatsu2010jump}, and exact simulation~\cite{pollock2016exact}. Continuous-time formulations~\cite{brockwell2001levy, kleinhans2007continuous} and conditionally Gaussian jump representations~\cite{Godsill2019,kindap2023generalised} improve efficiency; the latter technique is extended in our work from the prior L\'{e}vy measure to the exponentially tilted posterior. Data-driven approaches recover L\'{e}vy dynamics from observations via stable distribution properties~\cite{li2022extracting} and Koopman operator methods~\cite{lu2020discovering}. Despite exactness or arbitrarily small discretisation error, all of these methods lack end-to-end differentiability, precluding integration with modern ML pipelines.

\paragraph{Neural differential equation models}
Neural ODEs~\cite{chen2018neural} and their stochastic extensions provide scalable, end-to-end differentiable continuous-time sequence models, with scalable adjoint gradients~\cite{li2020scalable} and expressive latent architectures~\cite{kidger2020neural}. Extensions incorporate fractional white noise~\cite{tong2022fractional} and rough-path-driven jump dynamics~\cite{holberg2024exact}, and training efficiency has been substantially improved via stochastic optimal control~\cite{daems2025efficient}. All of these methods are restricted to continuous-path dynamics, precluding heavy-tailed jump behaviour.

\paragraph{Jump-diffusion and Markov jump process models}
A complementary line introduces jump structure into neural dynamical models: \cite{jia2019neural} extends neural ODEs with event-driven latent discontinuities, \cite{gao2025neuralmjd} combines a neural It\^{o} diffusion with a compound Poisson process, and \cite{zhang2024njdtpp} reformulates temporal point processes as neural jump-diffusion SDEs. All restrict jump magnitudes to light-tailed distributions, with no capacity to represent heavy-tailed L\'{e}vy geometry. A parallel line targets discrete-state Markov jump processes (MJPs) via variational mean-field approximations~\cite{opper2010approximate}, neural ODE rate functions~\cite{seifner2023neural}, expectation propagation~\cite{eich2025entropic}, and zero-shot foundation models~\cite{berghaus2024foundation}; these address discrete state spaces rather than continuous heavy-tailed dynamics.

\paragraph{Deep time-series forecasters}
Recurrent networks with parametric likelihoods~\cite{salinas2020deepar}, hierarchical interpolation~\cite{challu2023nhits}, linear models~\cite{zeng2023transformers}, and normalising flows conditioned on sequential context~\cite{rasul2021multivariate} achieve strong benchmark performance. Score-based diffusion models~\cite{rasul2021timegrad, tashiro2021csdi,nobis2024generative,nobis2025fractional} and interpretable diffusion architectures~\cite{yuan2024diffusionts} provide richer generative distributions. Distribution-free calibration via conformal prediction~\cite{stankeviciute2021conformal, xu2021conformal, zaffran2022adaptive} provides coverage guarantees independently of the noise model. All of these methods are restricted to light-tailed parametric noise distributions, with no capacity to represent heavy-tailed jump geometry.

\paragraph{Variational inference (VI) for stochastic processes}
VI for stochastic processes typically relies on the standard Girsanov theorem, which provides a tractable Radon--Nikodym derivative between path measures by changing the drift under a Brownian SDE, yielding a quadratic ELBO in closed form for Brownian motion~\cite{opper2019variational} and its fractional counterpart~\cite{daems2024variational}, with black-box variants using neural approximate posteriors~\cite{ryder2018blackbox, tzen2019neural}. 
For L\'{e}vy-driven SDEs, Fokker--Planck-based variational methods have been used to estimate drift functions under specified $\alpha$-stable noise~\cite{dai2021variational}, but their objective is drift recovery rather than posterior inference over latent paths or the L\'{e}vy measure. The resulting limitation is structural: existing stochastic process VI methods provide no variational mechanism for learning posterior heavy-tailed jump behaviour.

\vspace{-3mm}
\section{Background}
\label{sec:background}
\vspace{-3mm}
\begin{dfn}[L\'{e}vy process~\citep{applebaum2009levy}]
\label[dfn]{dfn:levy_process}
Let $(\Omega, \mathcal{F}, \bbP)$ be a probability space with filtration
$\{\mathcal{F}_t\}_{t \geq 0}$. An adapted stochastic process $\{L_t : t \geq 0\}$
with values in $\R^d$ is a \emph{L\'{e}vy process} if it satisfies: (i)~$L_0 = 0$ a.s.,
(ii)~independent increments, (iii)~stationary increments, and (iv)~stochastic continuity.
The stationarity and independence of increments suggest that L\'{e}vy processes may be
interpreted as a family of random walks in continuous time~\citep{bertoin1996levy,
ken1999levy, ContTankov2003}.
\end{dfn}

\begin{remark}[Connection to infinitely divisible distributions]
At any fixed time, the class of L\'{e}vy processes corresponds one-to-one with
infinitely divisible distributions (IDDs), which are probability measures $\mu$ on $\R^d$ that
can be written as the $n$-fold convolution $\mu = \mu_n * \cdots * \mu_n$ for any
positive integer $n$. This correspondence provides access to a rich family including
Gaussian (smooth diffusion), Poisson (discrete events), and stable processes
(self-similar, heavy-tailed). Brownian motion is the only L\'{e}vy process with
continuous sample paths; all other settings yield jump-diffusion
processes~\citep{ken1999levy}.
\end{remark}

\begin{dfn}[Characteristic function~\citep{Kallenberg_2002}]
\label[dfn]{dfn:char_fn}
A L\'{e}vy process $\{L_t\}_{t \geq 0}$ in $\R$ without a Brownian component is
defined through its characteristic function as
\begin{equation}
\label{eqn:levy-khintchine-repr}
    \E\!\left[e^{i u L_t}\right]
    = \exp\!\left(t \int_{\R_0} \bigl(e^{i u y} - 1 - i y \,\mathbf{1}_{|y|<1}\bigr)
      \levy(\diff y)\right),
\end{equation}
where $\levy$ is a L\'{e}vy measure on $\R_0 := \R\setminus\{0\}$ satisfying
$\int_{\R_0} \min(1, y^2)\,\levy(\diff y) < \infty$. This representation shows that the
distribution of a L\'{e}vy process is uniquely determined by its L\'{e}vy measure
$\levy$, so learning $\levy$ is sufficient to characterise the process.
\end{dfn}

\begin{remark}[Truncated L\'{e}vy measures]
\label[remark]{rem:truncation}
We focus on infinite-activity processes with $\int_{\R} \levy(\diff y) = \infty$.  In
practice, we work with a truncated measure restricted to $\R_\tau :=
\R\setminus(-\tau,\tau)$ for a threshold $\tau > 0$.  This removes the infinite
activity from small jumps while preserving the large jumps that characterise rare
discontinuities.  The contribution from small jumps below $\tau$ can be well-approximated
by Gaussian noise, providing a justification for this choice in practical
settings~\citep{AsmussenRosinski2001, godsill2024generalised}.
\end{remark}

\begin{dfn}[Symmetric $\alpha$-stable L\'{e}vy process]
\label[dfn]{dfn:stable}
The most prominent example of a heavy-tailed L\'{e}vy process is the symmetric
$\alpha$-stable process $\{L^\alpha_t\}$ with L\'{e}vy measure
\begin{equation}
\label{eq:levy_measure}
    \levy(\diff y) = |y|^{-1-\alpha}\,\diff y, \qquad \alpha \in (0, 2).
\end{equation}
The parameter $\alpha$ controls tail heaviness; stable processes have infinite variance
for all $\alpha$ and a finite mean only for $\alpha \in (1, 2)$.  The truncated version
is $\levy_\tau(\diff y) = |y|^{-1-\alpha}\,\mathbf{1}_{|y|\geq\tau}\,\diff y$, which
is finite since $\int_\tau^\infty y^{-1-\alpha}\,\diff y = \alpha^{-1}\tau^{-\alpha} <
\infty$.  Although $\levy_\tau$ is a finite measure, the $p$-th moment of the jump
size distribution remains infinite for $p \geq \alpha$, reflecting preservation of the
power-law tail under truncation.
\end{dfn}
\vspace{-2mm}

\vspace{-2mm}
\section{Variational Inference for L\'{e}vy processes (via Neural Tilting)}
\label{sec:method}
\vspace{-2mm}

We consider stochastic processes $X_{0:T} := \{X_t : 0 \leq t \leq T\}$ with
$X_t \in \R^d$ and fixed initial state $X_0 = x_0$, on Skorokhod spaces (the function space of right-continuous paths with
left limits, appropriate for processes with jump discontinuities) driven by general L\'{e}vy processes:
\begin{equation}
\label{eq:prior_sde}
    \diff X_t = f^\theta_t(X_t)\,\diff t + \diff L_t + \sigma(X_t)\,\diff B_t,
\end{equation}
where $f^\theta_t(\cdot)$ is a neural drift function with parameters $\theta$,
$\sigma(\cdot) \in \R^{d \times d}$ is a diffusion matrix with
$\mD(x) = \sigma(x)\sigma(x)^\top$, $\{L_t\}$ is a pure-jump L\'{e}vy process,
and $\{B_t\}$ is a standard Brownian motion.  For $d > 1$ we assume the driving
L\'{e}vy noise has \emph{independent components}, so the L\'{e}vy measure on $\R^d$
factorises as $\levy(\diff\vy) = \bigotimes_{i=1}^d \levy^{(i)}(\diff y_i)$; the fully
correlated case is a direction for future work.  Given noisy observations
$Y_{t_i} \sim \Normal{X_{t_i}}{\sigma_\varepsilon^2}$ at times
$\{t_i\}_{i=1}^n$, \textbf{our goal} is to approximate the posterior distribution
$p(X_{0:T} \mid Y_{0:T})$ and learn the parameters $\theta$.  We focus on the purely
non-Gaussian case $\sigma(\cdot) = 0$ in our experiments in order to isolate the
contribution of the jump structure, though the framework extends to joint jump-diffusion
dynamics.

Since the log-likelihood $\log p(Y_{0:T} \mid \theta)$ associated with noisy
observations generated from~\cref{eq:prior_sde} is intractable, we propose a
variational approach based on the evidence lower bound (ELBO):
\begin{equation}
\label{eq:elbo}
    \arg\sup_{Q}
    \left\{
      \E_{Q}\!\left[\sum_{i=1}^n \log p_\theta(Y_{t_i} \mid X_{t_i})\right]
      - \KL{Q}{P^\theta}
    \right\},
\end{equation}
where $P^\theta(X_{0:T})$ is the prior path measure induced by~\cref{eq:prior_sde} and the supremum is over trial
path measures $Q$, equaling the log-likelihood
$\log p_\theta(Y_{0:T} \mid \theta)$, attained when $Q = p_\theta(X_{0:T} \mid Y_{0:T})$.

\vspace{-2mm}
\subsection{Variational optimisation over Markov path measures}
\label{sec:variational_framework}
\vspace{-2mm}

We derive the variational family by optimising over Markov trial path measures, following the Doob h-transform structure of conditioned Markov processes~\cite{rogers2000diffusions}.

\begin{thm}[Optimal variational posterior family]
\label{thm:optimal_posterior_family}
Let $P^\theta$ be the prior path measure of the Lévy--diffusion SDE in~\cref{eq:prior_sde}, with generator $\mathcal L_t^P$ in~\cref{eq:prior_generator}. Optimising the ELBO in~\cref{eq:elbo} over Markov trial path measures yields the variational posterior SDE as
\begin{equation}
\label{eq:variational_sde}
    \diff X_t = \bigl(f^\theta_t(X_t) + \mD\nabla\phi_t(X_t)\bigr)\diff t
              + \sigma(X_t)\,\diff B_t
              + \diff L_t^\phi,
\end{equation}
where $\phi_t$ is the variational potential function, $\{L_t^\phi\}$ is the tilted L\'{e}vy process\footnote{$\{L_t^\phi\}$ is not a L\'{e}vy process since its jump measure depends on both time and the current state $X_t$.} with jump measure $\tlevy$ such that
\begin{equation}
\label{eq:tilted_measure}
    \tlevy(\diff\vy,t,X_t) = e^{\phi_t(X_t+\vy)-\phi_t(X_t)}\levy_\tau(\diff\vy).
\end{equation}
and $\mD = \sigma\sigma^\top$ is the diffusion matrix. For smooth test functions $G$, the posterior generator is 
\begin{equation}
\label{eq:tilted_generator}
\begin{aligned}
    \genQ_t G(x)
    &= \bigl(f_t^\theta(x) + \mD\nabla\phi_t(x)\bigr)^\top\!\nabla G(x)
     + \tfrac{1}{2}\mathrm{tr}\!\bigl(\mD\,\nabla^2 G(x)\bigr) \\
    &\quad + \int_{\R^d}\bigl(G(x+\vy)-G(x)\bigr)\,
             e^{\phi_t(x+\vy)-\phi_t(x)}\,\levy_\tau(\diff\vy).
\end{aligned}
\end{equation}
Compared to~\cref{eq:prior_generator}, the optimal variational family retains the same diffusion coefficient but acquires a Brownian drift correction $\mD\nabla\phi_t$ and a tilted L\'{e}vy measure. Setting $\phi_t \equiv 0$ in~\cref{eq:tilted_generator} recovers the prior generator $\genP_t$ of~\cref{eq:prior_generator} (see~\cref{app:variational_derivation}). For
pure-jump priors ($\sigma=0$), both the Brownian term and the drift correction $\mD\nabla\phi_t$ vanish, recovering the form used in \cref{sec:experiments}.
The result holds for any L\'{e}vy measure $\levy_\tau$.
\vspace{-3mm}
\end{thm}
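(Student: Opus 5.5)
We will obtain the result as a stochastic optimal control problem over Markov generators. Restrict the trial measures $Q$ to Markov processes with the same initial state $x_0$ and generators of the form
\[
\genQ_t G(x) = \bigl(f^\theta_t(x)+u_t(x)\bigr)^\top\nabla G(x) + \tfrac12\mathrm{tr}\bigl(\mD\nabla^2G(x)\bigr) + \int_{\R^d}\bigl(G(x+\vy)-G(x)\bigr)\,g_t(x,\vy)\,\levy_\tau(\diff\vy).
\]
This is the most general Markov family whose law is absolutely continuous with respect to $P^\theta$. The diffusion coefficient cannot change, since otherwise the quadratic variations differ and the measures are singular. The drift may only change within the range of $\sigma$, so $u_t=\mD w_t$. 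The jump kernel must be absolutely continuous with respect to $\levy_\tau$, with density $g_t>0$.

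The first step is the Girsanov formula for jump-diffusions. It gives the relative entropy rate
\[
\KL{Q}{P^\theta} = \E_Q\!\int_0^T \Bigl[\tfrac12 w_t^\top\mD w_t + \int \bigl(g_t\log g_t - g_t + 1\bigr)\,\levy_\tau(\diff\vy)\Bigr](X_t)\,\diff t.
\]
The likelihood term $\sum_i\log p_\theta(Y_{t_i}\mid X_{t_i})$ is treated as terminal or intermediate rewards at the observation times.

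Next, define the value function $\phi_t(x)$ as the maximal expected remaining reward minus KL, starting from $X_t=x$. Dynamic programming gives the HJB equation
\[
\partial_t\phi + \sup_{w,g}\Bigl\{(f^\theta+\mD w)^\top\nabla\phi + \tfrac12\mathrm{tr}(\mD\nabla^2\phi) + \int (\phi(x+\vy)-\phi(x))\,g\,\diff\levy_\tau - \tfrac12 w^\top\mD w - \int (g\log g-g+1)\,\diff\levy_\tau\Bigr\}=0
\]
between observations. At each $t_i$ there are jump conditions $\phi_{t_i^-}=\phi_{t_i^+}+\log p_\theta(Y_{t_i}\mid \cdot)$. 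The supremum is computed pointwise. The quadratic part yields $\mD w=\mD\nabla\phi$, hence the drift correction $\mD\nabla\phi_t$. For each $\vy$, the jump part is strictly concave in $g$; setting the derivative to zero gives $\log g=\phi_t(x+\vy)-\phi_t(x)$. This is exactly the tilt~\cref{eq:tilted_measure}. Substituting both optimisers into $\genQ_t$ reproduces~\cref{eq:tilted_generator}.

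The special cases then follow directly. Setting $\phi\equiv 0$ gives $w=0$ and $g=1$, recovering $\genP_t$. Setting $\sigma=0$ kills both the Brownian term and $\mD\nabla\phi$. No property of $\levy_\tau$ beyond being a Lévy measure was used, so the result holds for any $\levy_\tau$. Alternatively, we can cross-check via the Doob $h$-transform with $h=e^{\phi}$: the transformed generator $h^{-1}\genP(hG)-Gh^{-1}\genP h$ gives the same drift and jump tilt, confirming that the optimum is the true posterior within the Markov class.

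The main obstacle is the first step: rigorously justifying the Girsanov/KL formula for the jump part. This requires integrability such as $\int (g\log g-g+1)\,\diff\levy_\tau<\infty$ and the martingale property of the exponential density process. It also requires showing that the chosen parametrisation exhausts all absolutely continuous Markov trial measures. We would handle this by working with the truncated, finite measure $\levy_\tau$, where jumps form a compound Poisson process and the density is an explicit finite product. The verification theorem for the HJB equation is then standard under smoothness and boundedness assumptions on $\phi$.
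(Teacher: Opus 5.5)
Your argument is correct, but it reaches the tilt by a genuinely different route from the paper's.

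\textbf{The paper's route.} The paper never parametrises the admissible generators and never invokes a Girsanov entropy formula. It discretises time and optimises over arbitrary transition kernels, with Lagrange multipliers $\phi_t$ for the Chapman--Kolmogorov constraints. This gives the exponential-tilt ratio $q_{t+\Delta t,t}(x'\mid x)/p_{t+\Delta t,t}(x'\mid x)=e^{\phi_t(x')}/\E_p[e^{\phi_t(X_{t+\Delta t})}\mid x]$. Expanding numerator and denominator to first order in $\Delta t$ then yields $\genQ_t G=e^{-\phi}\genP_t(e^{\phi}G)-G\,e^{-\phi}\genP_t e^{\phi}$, which the paper evaluates separately on the jump, Brownian and drift parts. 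Your ``cross-check'' via the Doob $h$-transform is therefore exactly the paper's main computation.

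\textbf{What each route buys.} In the paper, the form of $Q$ is an output rather than an input. It needs neither your characterisation of all Markov laws absolutely continuous with respect to $P^\theta$ nor the KL rate for arbitrary $(w,g)$. The same transition ratio then gives the Corollary's KL formula by a second expansion. Your route buys several things the paper's does not:
\begin{itemize}
\item an explicit identity for $\phi_t$: it is the value function, and the data enter through $\phi_{t_i^-}=\phi_{t_i^+}+\log p_\theta(Y_{t_i}\mid\cdot)$; in the paper $\phi_t$ is an unidentified multiplier and the likelihood term is only implicit;
\item genuine optimality, from strict concavity of the pointwise problem, rather than mere stationarity;
\item the Corollary for free, by substituting the optimisers into the KL rate;
\item the fact that at the optimum your HJB collapses to $e^{-\phi}(\partial_t+\genP_t)e^{\phi}=0$, so $h=e^{\phi}$ is the backward likelihood and the optimum is the exact posterior.
\end{itemize}

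\textbf{A caveat on your verification step.} You cannot assume $\phi$ is bounded. With Gaussian observation noise, $\phi_t(x)\to-|x-Y_{t_n}|^2/(2\sigma_\varepsilon^2)+\mathrm{const}$ as $t\uparrow t_n$, which is unbounded below. Incidentally, this is precisely the quadratic form with $A_t<0$ that the paper adopts. Instead, work with the bounded positive function $h$. Alternatively, bypass HJB verification entirely: $\diff Q^*/\diff P^\theta\propto\prod_i p_\theta(Y_{t_i}\mid X_{t_i})$ maximises the ELBO, is Markov, and has the $h$-transform generator.
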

\begin{proof}
    The detailed, step-by-step proof is provided in~\cref{app:variational_derivation}.
\end{proof}
\vspace{-2mm}
\begin{cor}[KL divergence between $Q$ and $P^\theta$]
For the variational posterior process in~\cref{thm:optimal_posterior_family}, the path-space KL divergence is 
\begin{equation}
\label{eq:kl_divergence}
    \KL{Q}{P^\theta}
    = \E_Q\!\left[
        \int_0^T\!\int_{\R^d} f(\vy,t,X_t)\,\levy_\tau(\diff\vy)\,\diff t
        + \frac{1}{2}\int_0^T \nabla\phi_t(X_t)^\top \mD\,\nabla\phi_t(X_t)\,\diff t
      \right],
\end{equation}
where $f(\vy,t,X_t) = H_t(X_t,\vy)\ln H_t(X_t,\vy) - H_t(X_t,\vy) + 1$,
$H_t(x,\vy) = e^{\phi_t(x+\vy)-\phi_t(x)}$. Under the factorised L\'{e}vy measure of~\cref{eq:prior_sde}, the jump integral
decomposes into $d$ independent scalar integrals.
For pure-jump priors ($\sigma = 0$, hence $\mD = 0$), the Brownian term vanishes.
\end{cor}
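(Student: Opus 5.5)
The plan is to compute the Radon--Nikodym derivative $\diff Q/\diff P^\theta$ on path space with a Girsanov theorem for jump-diffusions, take its logarithm, and average under $Q$. Both $Q$ and $P^\theta$ start at $x_0$ and share the diffusion coefficient $\sigma$. Their generators differ only through the drift correction $\mD\nabla\phi_t$ and the state-dependent multiplicative reweighting $H_t(x,\vy)$ of the truncated jump measure $\levy_\tau$. This is exactly the setting of the Jacod--Shiryaev (or Cont--Tankov) measure-change theorem for semimartingales with absolutely continuous characteristics.

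\textbf{Step 1.} Write the log-density as a sum of a continuous (Brownian) part and a jump part. Choose $u_t$ so that $\sigma u_t = \mD\nabla\phi_t$, namely $u_t=\sigma^\top\nabla\phi_t(X_t)$. Let $\mu(\diff t,\diff\vy)$ be the jump random measure of $X$. Then
\begin{equation*}
\ln\frac{\diff Q}{\diff P^\theta}
 = \int_0^T u_t^\top \diff B^Q_t + \frac12\int_0^T |u_t|^2\diff t
 + \int_0^T\!\!\int \ln H_t(X_{t-},\vy)\,\mu(\diff t,\diff\vy)
 - \int_0^T\!\!\int \bigl(H_t(X_t,\vy)-1\bigr)\levy_\tau(\diff\vy)\diff t ,
\end{equation*}
where $B^Q$ is a $Q$-Brownian motion. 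Since $\levy_\tau$ is finite, the jump part is simply a ratio of compound-Poisson-type likelihoods with intensities $H_t\levy_\tau$ and $\levy_\tau$. No compensated small-jump integrals arise, and the $\mathbf{1}_{|y|<1}$ compensator of \cref{eqn:levy-khintchine-repr} only shifts the drift. That shift is the same under both measures once the tilting is carried out on the truncated measure, which is how \cref{eq:tilted_generator} is written.

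\textbf{Step 2.} Take $\E_Q$. The stochastic integral against $B^Q$ has zero mean, which leaves $\frac12\E_Q\int_0^T|u_t|^2\diff t = \frac12\E_Q\int_0^T \nabla\phi_t^\top\mD\nabla\phi_t\,\diff t$. Under $Q$, the compensator of $\mu$ is $H_t(X_{t-},\vy)\levy_\tau(\diff\vy)\diff t$. Hence
\[
\E_Q\int\!\!\int \ln H\,\diff\mu = \E_Q\int\!\!\int H\ln H\,\levy_\tau\diff t .
\]
Combining this with the last term gives the integrand $H\ln H - H + 1 = f(\vy,t,X_t)$, which is the claimed formula \cref{eq:kl_divergence}.

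\textbf{Step 3.} Handle the factorisation and the special case. When $\levy$ is a product of independent scalar components, the truncated measure $\levy_\tau$ is supported on the coordinate axes: jumps occur in one coordinate at a time. The integral over $\R^d$ therefore splits into $d$ scalar integrals, one per component, with $\vy = y_i e_i$. Setting $\mD=0$ kills the Brownian term.

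\textbf{Main obstacle.} The hardest part is justifying Step 1 rigorously: the absolute continuity $Q\ll P^\theta$ and the true-martingale property of the density process. The first thing I would try is to assume $\phi$ is smooth with $\phi_t$ bounded (or at most quadratically growing, as in the neural parametrisation) and $\sigma u$ bounded. Finiteness of $\levy_\tau$ then gives a Novikov/Lépingle--Mémin type condition directly. Under the same assumptions the stochastic-integral terms are $Q$-martingales rather than merely local martingales. In the general case I would argue by localization with stopping times and then pass to the limit by monotone convergence, using $f\ge 0$.
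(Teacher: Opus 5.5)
Your proof is correct, and it takes a different route from the paper's. The paper never forms $\diff Q/\diff P^\theta$. It discretises time, writes $\KL{Q}{P^\theta}$ as a sum of expected one-step conditional KLs, and uses the optimal transition ratio $e^{\phi_t(x')}/Z_t(x)$ from the Lagrangian step. Each one-step KL then becomes $\E_q[\phi_t(X_{t+\Delta t})\mid x]-\ln Z_t(x)$. Expanding both terms with the prior generator gives $\Delta t\{e^{-\phi}\genP_t(e^{\phi}\phi)-(1+\phi)e^{-\phi}\genP_t e^{\phi}\}$. Its jump part reduces to $\int (H\ln H-H+1)\,\levy_\tau(\diff\vy)$, and its Brownian part reduces to $\tfrac12\nabla\phi^\top\mD\nabla\phi$ once the trace terms cancel. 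The Riemann sum is then replaced by $\E_Q\int_0^T$.

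The paper's route is economical: the same $h$-transform ratio that gives the posterior generator in \cref{thm:optimal_posterior_family} also gives the KL. Its cost is that it is formal (first-order expansions, interchange of limit and sum), as the paper concedes in \cref{rem:continuous_time_formal}. Your continuous-time computation uses the Jacod--Shiryaev density, the $Q$-compensator $H\,\levy_\tau\,\diff t$ of the jump measure, and the zero-mean $\diff B^Q$ integral. It never uses optimality, so it proves the formula for any $Q$ obtained from $P^\theta$ by a drift correction $\sigma u$ and a multiplicative jump reweighting $H$; the KL is then $\tfrac12|u|^2$ plus the $f$-integral. It also isolates the rigour question in exactly the two items of that remark: a true-martingale density, and the martingale property of the stochastic integrals under $Q$.

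Two smaller remarks.

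\emph{Quadratic growth.} For the parametrisation the paper actually uses, $\phi_t(x)=A_tx^2+B_tx$, your claim that quadratic growth yields a L\'epingle--M\'emin condition ``directly'' is too optimistic. The intensity $\int H_t(x,y)\,\levy_\tau(\diff y)$ grows like $\exp(K_1^2/(4|A_t|))\approx e^{|A_t|x^2}$; this is the envelope $M(t,X_t)$ of \cref{thm:rejection_sampler}. Consequently $\E_{P^\theta}\exp\bigl(\int_0^T\!\int f\,\levy_\tau\,\diff t\bigr)=\infty$ under the heavy-tailed prior. What actually carries the argument there is your localisation fallback: exit times, non-explosion of $Q$, and monotone convergence using $f\geq 0$.

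\emph{Factorisation.} In Step 3 you read ``factorised'' as ``concentrated on the coordinate axes''. That is the correct characterisation of independent components, and it gives the additive split $\sum_i\int_\R f(y_i e_i,t,x)\,\levy_\tau^{(i)}(\diff y_i)$. Under the separable $\phi$ of \cref{rem:multivariate}, each summand depends only on $(x_i,y_i)$. However, this is not the literal product measure $\bigotimes_i\levy^{(i)}$ written in the paper. Under that measure, $H=\prod_i H_i$ makes $\int H\ln H$ a combination of products of scalar integrals rather than a sum. So say explicitly which measure you mean instead of calling it a product. The paper's appendix works in $\R$ and never proves this part of the statement, so here your proposal supplies something the paper omits.
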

The framework extends naturally to priors with state-dependent jump sizes
$\gamma(X_{t^-})y$; see~\cref{app:state_dep} for details and a discussion of
the simulation adaptation this requires.

\vspace{-2mm}
\subsection{Quadratic parametrization for tractability}
\label{sec:quadratic}
\vspace{-2mm}

Practical implementation requires addressing the computational challenges of (a)
sampling from the tilted process $L_t^\phi$ and (b) evaluating the intractable
expectations in~\cref{eq:kl_divergence}. We address both through a quadratic
parametrization that retains neural flexibility while enabling analytical simplifications.

For clarity, we specialise to $d = 1$; the multivariate extension under
the independence assumption follows from~\cref{rem:multivariate}.
Consider the quadratic tilting function:
\begin{equation}
\label{eq:quadratic_phi}
    \phi_t(x') = A_t {x'}^2 + B_t x',
\end{equation}
where $A_t$ and $B_t$ are time-dependent scalar coefficients parametrized by neural
networks. For a jump from state $x$ to $x + y$, the tilting factor becomes:
\begin{equation}
\label{eq:quadratic_tilting}
    H_t(x, y) = e^{\phi_t(x+y) - \phi_t(x)}
    = \exp\!\bigl(A_t(2xy + y^2) + B_t y\bigr).
\end{equation}
The quadratic structure concentrates the time dependence into the scalar coefficients
$A_t$ and $B_t$, evaluated once per time step, while the dependence on state $x$ and
jump size $y$ takes a fixed quadratic form.  This makes $H_t(x,y)$ a Gaussian-type
function of $y$ for any fixed $t$ and $x$, enabling analytical computation of the
normalising constants required for simulation.

\begin{remark}[Multivariate extension]
\label[remark]{rem:multivariate}
For a $d$-dimensional process, the quadratic parametrization extends under the
assumption of independent state dimensions.  The tilting function decomposes as
$\phi_t(\vx') = \sum_{i=1}^d \phi_t^{(i)}(x'_i)
= \sum_{i=1}^d \bigl[A_t^{(i)}(x'_i)^2 + B_t^{(i)} x'_i\bigr]$,
where each dimension has its own scalar coefficients $A_t^{(i)}$ and $B_t^{(i)}$.
This is equivalent to the matrix form
$\phi_t(\vx') = \vx^{\prime\top}\mA_t\vx' + \bm{b}_t^\top\vx'$
with diagonal $\mA_t = \mathrm{diag}(A_t^{(1)},\ldots,A_t^{(d)})$ and
$\bm{b}_t = (B_t^{(1)},\ldots,B_t^{(d)})^\top$.
The tilting factor then factorises across dimensions as
$H_t(\vx,\vy) = \prod_{i=1}^d \exp\!\bigl(A_t^{(i)}(2x_i y_i + y_i^2) + B_t^{(i)} y_i\bigr)$,
so the KL integral and the normalising constants of~\cref{sec:simulation} remain
tractable dimension-by-dimension.
\end{remark}
\vspace{-2mm}
\paragraph{Adaptive temporal encoding}
The coefficients $A_t$ and $B_t$ are outputs of two MLPs whose shared input is a
learned embedding of time.  For a query time $t$, the embedding is
\begin{equation}
\label{eq:temporal_encoding}
    e(t) = \sum_{i=1}^{N} w_i(t)\,\bm{v}_i,
    \qquad
    w_i(t) = \frac{e^{-s\,|\,t-\tau_i\,|}}{\sum_{j=1}^N e^{-s\,|\,t-\tau_j\,|}},
\end{equation}
where $\{\tau_i\}_{i=1}^N \subset [0,T]$ are learnable reference times,
$\{\bm{v}_i\} \subset \R^{d_e}$ are learnable embedding vectors, and $s > 0$ is
a learnable sharpness (stored in log-space to ensure positivity).  This is a
Nadaraya--Watson estimator with a Laplacian kernel: the weights $w_i(t)$ decay
exponentially with temporal distance from each reference point.  Crucially, both
the reference locations and the embeddings are optimised jointly with the rest of
the model, so the $\tau_i$ migrate during training toward times at which the ELBO
is sensitive to the tilting, in practice toward the locations of jumps, without
any explicit supervision of jump timing.  The coefficients are then
$A_t = -(a_{\min} + \mathrm{softplus}(f_A(e(t)))) < 0$ and $B_t = f_B(e(t))$,
where $f_A$ and $f_B$ are MLPs and $a_{\min} > 0$ is a small fixed positive
constant that enforces a strict lower bound on $|A_t|$; the reparametrisation
guarantees strict negativity for all $t$, as required for integrability of the
tilted measure~(\cref{rem:tractability}).

\paragraph{Adaptive tempering and finite moment properties}
For symmetric $\alpha$-stable priors, the constraint $A_t < 0$, enforced by the parametrisation, ensures the tilted L\'{e}vy
measure is integrable, a necessary condition for the variational SDE to be well-posed.
The resulting exponentially truncated power-law tail behaviour is analogous to tempered
stable processes~\citep{rosinski2007tempering}, though here the tempering adapts with
time and state rather than being fixed, ensuring all moments of the tilted process are
finite. This lighter-tailed character is a consequence of the quadratic parametrisation:
for instance, $\phi_t$ for which $\phi_t(X_t+y) - \phi_t(X_t)$ remains bounded in $y$
yields a posterior with the same tail index $\alpha$ as the prior.

\vspace{-1.5mm}
\subsection{Approximation of the KL divergence}
\label{sec:kl_approx}
\vspace{-1.5mm}
With $M$ Monte Carlo (MC) paths $\{X^{(m)}_{t_j}\}$ drawn from the variational posterior
(\cref{sec:simulation}), the ELBO~\cref{eq:elbo} is approximated as
\begin{equation}
\label{eq:elbo_mc}
    \mathcal{L}(\theta,\phi) \approx \frac{1}{M}\sum_{m=1}^M \Biggl[
      \sum_{i=1}^n \log p_\theta\bigl(Y_{t_i} \mid X_{t_i}^{(m)}\bigr)
      - \sum_{j=0}^{N-1} \Delta t_j\;\hat{\mathcal{I}}\bigl(t_j, X_{t_j}^{(m)}\bigr)
    \Biggr],
\end{equation}
where $\hat{\mathcal{I}}(t_j, X_{t_j})$ is a Monte Carlo estimate of
$\mathcal{I}(t_j, X_{t_j}) \coloneqq \int_\R f(y,t,X_t)\,\levy_\tau(\diff y)$
at time $t_j$.  The remainder of this
subsection develops $\hat{\mathcal{I}}$; path simulation is detailed
in~\cref{sec:simulation}.  We exploit the symmetry of the truncated stable measure to
reduce the integration domain to $\mathcal{I}(t, X_t) = \int_\tau^\infty \bigl(f(y, t, X_t) + f(-y, t,
X_t)\bigr)\,y^{-1-\alpha}\,\diff y$. We express this as an expectation over jump sizes $y \sim p(y) = \alpha\tau^\alpha
y^{-1-\alpha}$ (supported on $[\tau,\infty)$), with normalisation constant
$C = \alpha^{-1}\tau^{-\alpha}$.  Samples from $p(y)$ are drawn via inverse-CDF: $y = \tau(1 - u)^{-1/\alpha}$ where $u \sim \mathrm{Uniform}(0, 1)$. \Cref{alg:kl_loss_approximation} summarizes the resulting MC procedure. The total computational cost is $\mathcal{O}(MNK)$, where $M$ is the number of MC paths, $N$ the number of time steps, and $K$ the number of jump samples.

\vspace{-2mm}
\subsection{Forward simulation of tilted L\'{e}vy processes}
\label{sec:simulation}
\vspace{-2mm}
The variational SDE~\cref{eq:variational_sde} requires forward simulation of the
tilted process $L_t^\phi$ with state- and time-dependent jump measure $\tlevy$.  We
exploit the \emph{conditionally Gaussian} structure of stable processes.

\begin{thm}[Conditionally Gaussian representation of tilted stable jumps]
\label{thm:cond_gaussian}
The tilted jump measure $\tlevy(y, t, X_t)$ admits a disintegration
$\tlevy(y, t, X_t) = \int_0^\infty \tilde{\sigma}(r; y, t, X_t)\,\tilde{\pi}_\tau(r, t, X_t)\,\diff r$,
where $\tilde{\sigma}(r; \cdot, t, X_t)$ is a probability kernel and $\tilde{\pi}_\tau$
is a truncated non-negative mixing measure:
\begin{equation}
\label{eq:levy_disintegration_terms}
    \tilde{\sigma}(r; y, t, X_t)
    = \frac{e^{\phi_t(X_t+y)-\phi_t(X_t)}\,\Normal{y}{r^2\sigma_G^2}}{C(r, t, X_t)},
    \quad
    \tilde{\pi}_\tau(r, t, X_t)
    = \frac{C(r, t, X_t)\,r^{-1-\alpha}}{K(\alpha, t, X_t)}\,\mathbf{1}_{r \geq \tau},
\end{equation}
with normalising constants $C(r, t, X_t)$ and $K(\alpha, t, X_t)$, and
$\sigma_G > 0$ the Gaussian scale of the variance-mean mixture
representation of the stable process.
\vspace{-1mm}
\end{thm}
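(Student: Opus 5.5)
The plan is to start from the classical conditionally Gaussian (variance-mean mixture) representation of the symmetric stable L\'evy measure, and then check that multiplying by the tilt $H_t(X_t,y)=e^{\phi_t(X_t+y)-\phi_t(X_t)}$ only reshuffles normalising constants. The first step is the scale-mixture identity: for a Gaussian density $\Normal{y}{r^2\sigma_G^2}$ in $y$ with mean zero and variance $r^2\sigma_G^2$, the substitution $r = |y|/(\sigma_G s)$ gives
\begin{equation*}
\int_0^\infty \Normal{y}{r^2\sigma_G^2}\, r^{-1-\alpha}\,\diff r = c_{\alpha,\sigma_G}\,|y|^{-1-\alpha},
\end{equation*}
where $c_{\alpha,\sigma_G} = \sigma_G^{\alpha}(2\pi)^{-1/2}\int_0^\infty s^{\alpha}e^{-s^2/2}\,\diff s$ is finite for $\alpha\in(0,2)$. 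Absorbing $c_{\alpha,\sigma_G}$ into the intensity (equivalently, into the choice of $\sigma_G$), this writes $\levy(\diff y)$ of \cref{dfn:stable} as a mixture of centred Gaussians over the radial variable $r$ with mixing density $r^{-1-\alpha}$. This is the representation of \citet{Godsill2019,kindap2023generalised}.

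Truncation is imposed on the mixing variable, $r\ge\tau$, rather than on $y$. I would state explicitly that this is the working definition of $\levy_\tau$ in this section: it is a finite measure, with mass proportional to $\alpha^{-1}\tau^{-\alpha}$, and it has the same power-law tail as the $y$-truncation of \cref{rem:truncation}. The two truncations differ only in small-jump mass, which the Gaussian approximation of \cref{rem:truncation} covers anyway.

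Next I multiply both sides by $H_t(X_t,y)$ and use \cref{eq:tilted_measure}. Since $H_t$ does not depend on $r$, it passes inside the $r$-integral:
\begin{equation*}
\tlevy(y,t,X_t) = \int_\tau^\infty H_t(X_t,y)\,\Normal{y}{r^2\sigma_G^2}\, r^{-1-\alpha}\,\diff r .
\end{equation*}
Set $C(r,t,X_t) := \int_\R H_t(X_t,y)\,\Normal{y}{r^2\sigma_G^2}\,\diff y$, then multiply and divide by $C$. This turns the integrand into $\tilde\sigma(r;y,t,X_t)\cdot C(r,t,X_t)\,r^{-1-\alpha}$, where $\tilde\sigma(r;\cdot,t,X_t)$ integrates to one in $y$ by construction, so it is a probability kernel. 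Now set $K(\alpha,t,X_t) := \int_\tau^\infty C(r,t,X_t)\,r^{-1-\alpha}\,\diff r$. Writing $C\,r^{-1-\alpha}\mathbf 1_{r\ge\tau} = K\,\tilde\pi_\tau$ gives the stated form \cref{eq:levy_disintegration_terms}. I read the statement as carrying $K$ as the total mass of the tilted measure, i.e.\ $\tlevy = K\int \tilde\sigma\,\tilde\pi_\tau\,\diff r$ with $\tilde\pi_\tau$ a probability density. Nonnegativity of $\tilde\pi_\tau$ is immediate. Fubini/Tonelli applies throughout because every integrand is nonnegative.

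The main obstacle is finiteness of $C$ and $K$, without which the kernel is not well defined. I would handle this with the quadratic parametrisation \cref{eq:quadratic_tilting}. Here $H_t(x,y)=\exp\bigl(A_t y^2 + (2A_t x + B_t)y\bigr)$, so the product $H_t\,\Normal{y}{r^2\sigma_G^2}$ is a Gaussian in $y$ with precision $r^{-2}\sigma_G^{-2} - 2A_t$. This precision is positive because $A_t<0$. Completing the square gives $C$ in closed form:
\begin{equation*}
C(r,t,x) = \bigl(1-2A_t r^2\sigma_G^2\bigr)^{-1/2}\exp\!\left(\frac{(2A_t x+B_t)^2 r^2\sigma_G^2}{2(1-2A_t r^2\sigma_G^2)}\right).
\end{equation*}
As $r\to\infty$, the exponent tends to $(2A_tx+B_t)^2/(-4A_t)$, which is bounded, and the prefactor decays like $r^{-1}$. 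Hence $C\,r^{-1-\alpha}$ is integrable on $[\tau,\infty)$ and $K<\infty$. The same computation shows that $\tilde\sigma$ is an explicit Gaussian in $y$, which is what enables simulation. Finiteness of $K$ uses only $A_t<0$ (the constraint enforced via $a_{\min}$), and $x$ stays fixed throughout, so no uniformity in the state is needed for the pointwise statement.
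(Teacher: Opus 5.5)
Your proposal is correct and follows essentially the same route as the paper's proof in \cref{app:cond_gaussian}. Both start from the Gaussian scale-mixture disintegration of the symmetric stable measure with mixing density $r^{-1-\alpha}$ truncated to $r\geq\tau$, multiply by the $r$-independent tilt, and take $C$ and $K$ as the normalisers of the tilted kernel and the tilted mixing density. Your extra checks are sound and fill in details the paper leaves implicit: the explicit substitution verifying the scale-mixture identity, the finiteness of $C$ and $K$ from $A_t<0$, and reading $K$ as the total mass, which matches the paper's claim that the resulting $\tilde{\nu}_\tau$ integrates to one.
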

\begin{proof}
The result extends the disintegration of $\alpha$-stable jump measures established
by~\cite{rosinski2001series,lemke2015fully} to include the tilting factor $e^{\phi_t(X_t+y)-\phi_t(X_t)}$.
See \cref{app:cond_gaussian} for a detailed proof.
\end{proof}

\begin{remark}[Analytical tractability under quadratic parametrization]
\label[remark]{rem:tractability}
Under~\cref{eq:quadratic_phi}, the normalising constant $C(r, t, X_t)$ of the
conditional kernel is available in closed form.  Setting $K_1 = 2A_t X_t + B_t$ and
$K_2 = A_t - \frac{1}{2r^2\sigma_G^2}$ ($K_2 < 0$ is enforced by $A_t < 0$,
as guaranteed by the parametrisation of \cref{sec:quadratic}),
$C(r, t, X_t) = \tfrac{1}{\sqrt{-2K_2 r^2\sigma_G^2}}\exp\!\bigl(-\tfrac{K_1^2}{4K_2}\bigr)$.
Thus, the conditional kernel $\tilde{\sigma}(r;\cdot,t,X_t)$ reduces to $\Normal{\mu_y(r)}{\sigma_y^2(r)}$ with $\mu_y(r) = -K_1/(2K_2)$ and
$\sigma_y^2(r) = -1/(2K_2)$.
\end{remark}

\begin{figure}[t]
\centering
\begin{minipage}[t]{0.42\linewidth}
\begin{algorithm}[H]
\small
\caption{KL approximation for tilted symmetric L\'{e}vy measures}
\label{alg:kl_loss_approximation}
\begin{algorithmic}[1]
\Require Process paths $\{X_{t_j}^{(m)}\}$; neural outputs $A_{t_j}$, $B_{t_j}$
\For{each time step $t_j$ and each path $m$}
  \State Evaluate $A_{t_j}$, $B_{t_j}$ from the neural networks.
  \State Draw $K$ samples $\{y_k\}_{k=1}^K$ via $y_k=\tau(1-u_k)^{-1/\alpha}$, $u_k\!\sim\!\mathrm{Uniform}(0,1)$.
  \State Compute $f_k^+ = f(y_k, t_j, X_{t_j}^{(m)})$ and
         $f_k^- = f(-y_k, t_j, X_{t_j}^{(m)})$.
  \State Approximate:
         $\mathcal{I}(t_j, X_{t_j}^{(m)}) \approx \frac{C}{K}\sum_{k=1}^K(f_k^+ + f_k^-)$.
\EndFor
\end{algorithmic}
\end{algorithm}
\end{minipage}
\hfill%
\begin{minipage}[t]{0.55\linewidth}
\begin{algorithm}[H]
\small
\caption{Tilted L\'{e}vy SDE simulation}
\label{alg:tilted_levy_simulation}
\begin{algorithmic}[1]
\Require Initial state $X_{t_0}$; time grid $\{t_j\}_{j=0}^{N-1}$; neural network parameters
\For{each time step $t_j$}
  \State Evaluate neural outputs $A_{t_j}$, $B_{t_j}$.
  \State Draw $K$ samples $\{y_k\}$ via $y_k=\tau(1-u_k)^{-1/\alpha}$, $u_k\!\sim\!\mathrm{Uniform}(0,1)$.
  \State Compute $\Lambda_j$ via~\cref{eq:intensity_mc}; draw $N_j \sim \Poisson{\Lambda_j}$.
  \For{$i = 1, \ldots, N_j$}
    \State Propose $r^* \sim r^{-1-\alpha}$; accept as $r^{(i)}$ with probability $C(r^*,t,X_t)/M(t,X_t)$.
    \State Draw jump $y^{(i)} \sim \Normal{\mu_y(r^{(i)})}{\sigma_y^2(r^{(i)})}$.
  \EndFor
  \State Update: $X_{t_{j+1}} = X_{t_j} + f^\theta_{t_j}(X_{t_j})\,\Delta t + \sum_{i=1}^{N_j} y^{(i)}$.
\EndFor
\end{algorithmic}
\end{algorithm}
\end{minipage}
\vspace{-4mm}
\end{figure}

\begin{thm}[Exact rejection sampler for the tilted mixing measure]
\label{thm:rejection_sampler}
Under the quadratic parametrisation of~\cref{sec:quadratic}, the tilted mixing
measure $\tilde{\pi}_\tau(r,t,X_t) \propto C(r,t,X_t)\,r^{-1-\alpha}$
of~\cref{thm:cond_gaussian} admits an exact rejection sampler with proposal
$q(r) \propto r^{-1-\alpha}$, $r \geq \tau$.  The normalising constant
$C(r,t,X_t)$ of~\cref{rem:tractability} satisfies the state-dependent envelope
\begin{equation}
\label{eq:envelope}
    C(r,t,X_t) \leq M(t,X_t) = \exp\!\left({K_1^2}/{4|A_t|}\right),
\end{equation}
so each proposal $r^*$ is accepted with probability $C(r^*,t,X_t)/M(t,X_t)$.
The sampler is exact and requires no discretisation, gradient computation, or
step-size tuning.
\vspace{-3mm}
\end{thm}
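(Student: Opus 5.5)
The plan is the textbook rejection-sampling argument. Everything hinges on a pointwise bound on the closed-form constant $C(r,t,X_t)$ from \cref{rem:tractability}, so I will prove that bound first and then invoke the standard correctness of rejection sampling. Throughout, $t$ and $X_t$ are fixed, so $K_1 = 2A_tX_t + B_t$ is a fixed real number and $A_t<0$ is guaranteed by the parametrisation of \cref{sec:quadratic}.

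\textbf{Step 1 (the envelope).} Write $C(r,t,X_t)$ as a product of two factors and bound each uniformly in $r \ge \tau$.
\begin{itemize}[noitemsep,topsep=0em,leftmargin=*]
\item \emph{Prefactor.} Substituting $K_2 = A_t - \tfrac{1}{2r^2\sigma_G^2}$ gives $-2K_2 r^2\sigma_G^2 = 1 + 2|A_t| r^2\sigma_G^2 \ge 1$. Hence $(-2K_2 r^2\sigma_G^2)^{-1/2} \le 1$.
\item \emph{Exponential factor.} Since $-K_2 = |A_t| + \tfrac{1}{2r^2\sigma_G^2} \ge |A_t| > 0$, we get $-\tfrac{K_1^2}{4K_2} = \tfrac{K_1^2}{4(-K_2)} \le \tfrac{K_1^2}{4|A_t|}$.
\end{itemize}
Multiplying the two bounds gives $C(r,t,X_t) \le M(t,X_t) = \exp\bigl(K_1^2/(4|A_t|)\bigr)$ for every $r>0$, which is \cref{eq:envelope}. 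The essential point is that the constant $a_{\min}>0$ in the parametrisation keeps $|A_t|$ bounded away from zero, so $M$ is finite.

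\textbf{Step 2 (well-defined target and proposal).}
\begin{itemize}[noitemsep,topsep=0em,leftmargin=*]
\item The proposal $q(r) = \alpha\tau^\alpha r^{-1-\alpha}\mathbf{1}_{r\ge\tau}$ is a proper density, and it can be sampled exactly by inverse CDF, $r = \tau(1-u)^{-1/\alpha}$.
\item By Step 1, the unnormalised target $g(r) = C(r,t,X_t)\,r^{-1-\alpha}\mathbf{1}_{r\ge\tau}$ satisfies $g \le M\,\alpha^{-1}\tau^{-\alpha}\,q$.
\item In particular the normaliser $K(\alpha,t,X_t) = \int_\tau^\infty g(r)\,\diff r$ of \cref{thm:cond_gaussian} is finite and positive, so $\tilde\pi_\tau$ is a genuine probability density.
\end{itemize}

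\textbf{Step 3 (exactness).} Apply the standard rejection-sampling lemma. Draw $r^*\sim q$ and accept it with probability $C(r^*,t,X_t)/M(t,X_t)$, which lies in $[0,1]$ by Step 1. The joint density of proposing and accepting is then $q(r)\,C(r)/M \propto g(r)$. Conditioning on acceptance therefore yields exactly $\tilde\pi_\tau$.

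The overall acceptance probability is $K(\alpha,t,X_t)\,/\,(M\,\alpha^{-1}\tau^{-\alpha}) > 0$. So the number of trials is geometric, and the procedure terminates almost surely.

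Combined with \cref{rem:tractability}, the subsequent draw $y\sim\Normal{\mu_y(r)}{\sigma_y^2(r)}$ is also exact. Every step uses only closed-form quantities, with no discretisation, gradients or step sizes.

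\textbf{Main obstacle.} The only substantive step is the envelope in Step 1: one must check that $K_2<0$ and that the bound is uniform in $r$. The sign structure makes both parts immediate. I would also remark that the bound is tight as $r\to\infty$, but can be loose when $K_1^2/|A_t|$ is large. This affects efficiency only, not exactness.
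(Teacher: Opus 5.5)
Your argument is correct and is essentially the paper's proof. The paper substitutes $u = 2|A_t|r^2\sigma_G^2$ to write $C(r,t,X_t) = (1+u)^{-1/2}\exp\bigl(\tfrac{K_1^2}{4|A_t|}\cdot\tfrac{u}{1+u}\bigr)$ and bounds the two factors by $1$ and $M(t,X_t)$, exactly as your $-2K_2r^2\sigma_G^2 = 1+u \geq 1$ and $-K_2 \geq |A_t|$ do; it then cites standard rejection-sampling correctness, which your Steps 2--3 spell out. One correction to your closing aside: the envelope is not tight as $r\to\infty$, because there the prefactor $(1+u)^{-1/2}\to 0$ drives $C(r,t,X_t)\to 0$ even though the exponential factor tends to $M(t,X_t)$; this affects only efficiency, not exactness.
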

\begin{proof}
See~\cref{app:rejection_sampler}.
\end{proof}
\vspace{-3mm}
\paragraph{Jump intensity estimation}
For time interval $(t_j, t_{j+1})$ of length $\Delta t$, the total jump intensity is
approximated as
\begin{equation}
\label{eq:intensity_mc}
    \Lambda_j \approx \frac{\Delta t\,C}{K}
    \sum_{k=1}^K\bigl(H_t(X_{t_j}, y_k) + H_t(X_{t_j}, -y_k)\bigr),
\end{equation}
reusing the samples $\{y_k\}$ from the KL computation.  The number of jumps in the
interval is then $N_j \sim \Poisson{\Lambda_j}$.  \Cref{alg:tilted_levy_simulation}
assembles the complete forward simulation procedure.

The total computational cost of \cref{alg:tilted_levy_simulation} is
$\mathcal{O}(MN(K + \bar{N}_j))$ per iteration, where $\bar{N}_j$ is the expected
number of jumps per time interval; the rejection sampler adds a constant expected
factor per jump equal to the reciprocal of the mean acceptance probability.  In
training, the $K$ intensity samples are shared with
Alg.~\ref{alg:kl_loss_approximation}, reducing the incremental simulation cost
to $\mathcal{O}(MN\bar{N}_j)$.

\begin{table*}[t]
\caption{Synthetic experiment results ($\sigma_\varepsilon = 0.10$), reporting means
over 50 realisations per $\alpha$.  Held-out CRPS, mean absolute parameter recovery
errors, and jump CRPS at thresholds ($p_{97.5}, p_{99}$) for
the OU and double-well (DW) systems; lower is better.  Best result per metric in
\textbf{bold}.}
\label{tab:synthetic_main}
\centering
\scriptsize
\setlength{\tabcolsep}{2pt}
\resizebox{\textwidth}{!}{%
\begin{tabular}{l cc cc cc cc cc cc cc cc cc cc}
\toprule
& \multicolumn{10}{c}{Ornstein--Uhlenbeck}
& \multicolumn{10}{c}{Double well} \\
\cmidrule(lr){2-11}\cmidrule(lr){12-21}
& \multicolumn{2}{c}{CRPS $\downarrow$}
& \multicolumn{2}{c}{$|\hat{\theta}{-}\theta^*|$ $\downarrow$}
& \multicolumn{2}{c}{$|\hat{\mu}{-}\mu^*|$ $\downarrow$}
& \multicolumn{2}{c}{$p_{97.5}$ $\downarrow$}
& \multicolumn{2}{c}{$p_{99}$ $\downarrow$}
& \multicolumn{2}{c}{CRPS $\downarrow$}
& \multicolumn{2}{c}{$|\hat{\theta}_1{-}\theta_1^*|$ $\downarrow$}
& \multicolumn{2}{c}{$|\hat{\theta}_2{-}\theta_2^*|$ $\downarrow$}
& \multicolumn{2}{c}{$p_{97.5}$ $\downarrow$}
& \multicolumn{2}{c}{$p_{99}$ $\downarrow$} \\
\cmidrule(lr){2-3}\cmidrule(lr){4-5}\cmidrule(lr){6-7}\cmidrule(lr){8-9}\cmidrule(lr){10-11}
\cmidrule(lr){12-13}\cmidrule(lr){14-15}\cmidrule(lr){16-17}\cmidrule(lr){18-19}\cmidrule(lr){20-21}
$\alpha$ & G & TS & G & TS & G & TS & G & TS & G & TS & G & TS & G & TS & G & TS & G & TS & G & TS \\
\midrule
$1.1$ & $2.88$ & $\mathbf{0.89}$ & $4.63$ & $\mathbf{2.27}$ & $\mathbf{2.97}$ & $5.43$ & $2.15$ & $\mathbf{2.10}$ & $2.46$ & $\mathbf{2.41}$ & $0.41$ & $\mathbf{0.18}$ & $1.46$ & $\mathbf{0.76}$ & $1.06$ & $\mathbf{0.20}$ & $0.85$ & $\mathbf{0.60}$ & $1.35$ & $\mathbf{1.05}$ \\
$1.2$ & $1.63$ & $\mathbf{0.63}$  & $4.32$ & $\mathbf{1.98}$ & $\mathbf{3.01}$ & $6.11$ & $1.12$ & $\mathbf{0.56}$ & $1.23$ & $\mathbf{0.67}$ & $0.34$ & $\mathbf{0.14}$ & $1.81$ & $\mathbf{0.79}$ & $1.41$ & $\mathbf{0.33}$ & $0.58$ & $\mathbf{0.37}$ & $0.85$ & $\mathbf{0.62}$ \\
$1.3$ & $0.48$ & $\mathbf{0.37}$  & $5.25$ & $\mathbf{2.43}$ & $\mathbf{2.57}$ & $4.54$ & $\mathbf{2.35}$ & $2.36$ & $\mathbf{3.01}$ & $3.20$ & $0.37$ & $\mathbf{0.16}$ & $1.95$ & $\mathbf{1.05}$ & $1.72$ & $\mathbf{0.33}$ & $0.55$ & $\mathbf{0.35}$ & $0.77$ & $\mathbf{0.54}$ \\
$1.4$ & $0.72$ & $\mathbf{0.61}$  & $6.01$ & $\mathbf{2.63}$ & $\mathbf{2.66}$ & $4.84$ & $0.82$ & $\mathbf{0.79}$ & $\mathbf{1.25}$ & $1.26$ & $0.36$ & $\mathbf{0.15}$ & $1.72$ & $\mathbf{1.24}$ & $1.62$ & $\mathbf{0.28}$ & $0.60$ & $\mathbf{0.38}$ & $0.85$ & $\mathbf{0.60}$ \\
$1.5$ & $0.62$ & $\mathbf{0.45}$  & $6.11$ & $\mathbf{1.96}$ & $\mathbf{2.62}$ & $3.51$ & $0.86$ & $\mathbf{0.71}$ & $1.14$ & $\mathbf{0.98}$ & $0.34$ & $\mathbf{0.14}$ & $\mathbf{1.15}$ & $1.39$ & $1.07$ & $\mathbf{0.30}$ & $0.47$ & $\mathbf{0.29}$ & $0.61$ & $\mathbf{0.41}$ \\
$1.6$ & $0.41$ & $\mathbf{0.23}$  & $4.43$ & $\mathbf{2.04}$ & $\mathbf{2.57}$ & $3.56$ & $0.44$ & $\mathbf{0.33}$ & $0.52$ & $\mathbf{0.38}$ & $0.36$ & $\mathbf{0.14}$ & $1.42$ & $\mathbf{1.16}$ & $1.21$ & $\mathbf{0.27}$ & $0.47$ & $\mathbf{0.27}$ & $0.59$ & $\mathbf{0.38}$ \\
$1.7$ & $0.42$ & $\mathbf{0.28}$  & $5.79$ & $\mathbf{2.38}$ & $\mathbf{2.29}$ & $4.41$ & $0.52$ & $\mathbf{0.43}$ & $0.58$ & $\mathbf{0.48}$ & $0.37$ & $\mathbf{0.15}$ & $\mathbf{1.08}$ & $1.26$ & $0.76$ & $\mathbf{0.25}$ & $0.47$ & $\mathbf{0.28}$ & $0.56$ & $\mathbf{0.38}$ \\
$1.8$ & $0.36$ & $\mathbf{0.19}$  & $5.23$ & $\mathbf{2.16}$ & $\mathbf{1.72}$ & $2.20$ & $0.91$ & $\mathbf{0.28}$ & $0.98$ & $\mathbf{0.34}$ & $0.37$ & $\mathbf{0.15}$ & $\mathbf{1.11}$ & $1.19$ & $1.03$ & $\mathbf{0.30}$ & $0.47$ & $\mathbf{0.26}$ & $0.55$ & $\mathbf{0.34}$ \\
$1.9$ & $0.39$ & $\mathbf{0.22}$  & $4.99$ & $\mathbf{1.62}$ & $\mathbf{2.04}$ & $3.36$ & $0.51$ & $\mathbf{0.31}$ & $0.54$ & $\mathbf{0.33}$ & $0.38$ & $\mathbf{0.16}$ & $2.01$ & $\mathbf{1.14}$ & $1.49$ & $\mathbf{0.39}$ & $0.46$ & $\mathbf{0.26}$ & $0.52$ & $\mathbf{0.32}$ \\
\midrule
All $\alpha$ & $0.88$ & $\mathbf{0.43}$ & $5.19$ & $\mathbf{2.16}$ & $\mathbf{2.50}$ & $4.21$ & $1.08$ & $\mathbf{0.87}$ & $1.30$ & $\mathbf{1.12}$ & $0.37$ & $\mathbf{0.15}$ & $1.52$ & $\mathbf{1.11}$ & $1.26$ & $\mathbf{0.29}$ & $0.55$ & $\mathbf{0.34}$ & $0.74$ & $\mathbf{0.52}$ \\
\bottomrule
\end{tabular}%
}
\vspace{-4mm}
\end{table*}

\vspace{-1.5mm}
\section{Experiments}
\label{sec:experiments}
\vspace{-1.5mm}
We validate our framework on two settings that both exhibit pronounced heavy-tailed dynamics: (i) synthetic data with ground-truth parameters and (ii)
challenging real-world forecasting task.

\paragraph{Datasets}
In the \emph{synthetic setting}, data are generated from stable-process-driven SDEs
with known parameters and trajectories, enabling direct assessment of posterior quality and
parameter recovery.  We consider two drift families: a linear
Ornstein--Uhlenbeck (OU) system (parameters $\theta, \mu$) and a nonlinear double-well
potential (parameters $\theta_1, \theta_2$).\footnote{OU drift: $f^\theta(x) = \theta(\mu - x)$
with $\theta > 0$. Double-well drift: $f^\theta(x) = \theta_1 x - \theta_2 x^3$ with
$\theta_1, \theta_2 > 0$.}  For each stability index
$\alpha \in \{1.1, 1.2, \ldots, 1.9\}$ we generate 50 independent realisations with
randomly drawn drift parameters.  In the \emph{financial setting}, we work with
hourly log-prices of ten technology stocks (NVDA, GOOGL, MSFT, AAPL, AMZN, META,
TSLA, AMD, NFLX, INTC) spanning June~2024 to April~2026.  Models are trained on
rolling 30-day windows and evaluated on the subsequent 2-day forecast horizon,
yielding between 302 and 314 non-overlapping evaluation periods depending on ticker
availability.  We report results on both univariate (per-ticker) and multivariate
($d = 10$) forecasting tasks.

\paragraph{Baselines}
For the synthetic experiments we compare against a Gaussian SDE with an identical
drift parametrisation, isolating the effect of the noise model.  For the financial
experiments we additionally compare against
DeepAR~\citep{salinas2020deepar}, N-HiTS~\citep{challu2023nhits},
DLinear~\citep{zeng2023transformers}, Neural Jump SDEs~\citep{jia2019neural},
Neural MJD~\citep{gao2025neuralmjd}, and a Gaussian SDE.

\paragraph{Metrics}
Our primary metric is the Continuous Ranked Probability Score (CRPS)~\cite{matheson1976scoring}, which jointly
rewards sharpness and calibration and applies to any model that generates sample paths.
For synthetic experiments we additionally report mean absolute parameter recovery error
$|\hat{\theta} - \theta^*|$.  For financial experiments we supplement CRPS with MSE
and MAE to enable comparison with deterministic baselines, and report a
\emph{jump CRPS} computed on the subset of price increments exceeding the $p$-th
percentile for $p \in \{90, 95, 97.5, 99\}$ to specifically assess tail performance.
Uncertainty calibration is assessed via reliability diagrams.
Full implementation details, including wall-clock training times (\cref{tab:runtime}), are provided in~\cref{app:implementation}.

\vspace{-3mm}
\subsection{Synthetic experiments}
\label{sec:exp_synthetic}
\vspace{-2mm}

\noindent\textbf{Linear drift (Ornstein--Uhlenbeck)}
\cref{tab:synthetic_main} reports held-out CRPS, parameter recovery errors at
$\sigma_\varepsilon = 0.10$ and jump CRPS; additional details and results at $\sigma_\varepsilon = 0.05$ are in~\cref{app:extra_synthetic}.  Our model achieves lower CRPS than the Gaussian
baseline at every $\alpha$ value, with the improvement most pronounced for small
$\alpha$, reaching $3.2\times$ at $\alpha = 1.1$, and narrowing as $\alpha \to 2$ as
expected.  For the mean-reversion rate $\theta$, our model substantially reduces
recovery error ($2.16$ vs.\ $5.19$ averaged over all $\alpha$), confirming that
correctly modelling heavy-tailed noise prevents the drift from absorbing spurious jump
contributions; the global mean $\mu$ is harder to recover due to partial trade-offs
with the tilting function when observations are sparse.

\noindent\textbf{Non-linear drift (double well).}
The double-well system shows consistent CRPS improvement across all $\alpha$ ($0.15$
vs.\ $0.37$, a $2.4\times$ ratio).  The most striking result is for the cubic parameter
$\theta_2$, whose recovery error falls from $1.26$ to $0.29$, a $4.3\times$
improvement, confirming that without a correct heavy-tailed noise model the drift
absorbs rare large-amplitude excursions as apparent curvature.  Recovery of the linear
parameter $\theta_1$ is also improved overall ($1.11$ vs.\ $1.52$), with the tail
advantage confirmed at every threshold in~\cref{tab:synthetic_jump}.  Results at
$\sigma_\varepsilon = 0.05$ in~\cref{app:extra_synthetic} show the same pattern.

\vspace{-2mm}
\subsection{Financial forecasting}
\label{sec:exp_financial}
\vspace{-2mm}
We evaluate on NVDA hourly log-prices across 314 rolling 30-day training windows with a
2-day forecast horizon.  Each window proceeds in two phases.  In the training phase, the
ELBO is maximised over $\theta$ and $\phi$, serving as a system identification procedure,
with $\alpha$ fixed by grid search over $\{1.1, \ldots, 1.9\}$.  Forecasts are then
generated by the learned prior SDE (with learned $\theta$ and $\phi=0$), initialised from posterior samples at the window boundary. The Gaussian SDE follows the same protocol, isolating the noise model as the sole difference.
\cref{tab:financial_main} reports mean CRPS alongside jump CRPS at four tail
thresholds.  Results for GOOGL follow the same qualitative pattern, confirming that the
findings generalise across stocks; full results are provided in~\cref{app:extra_financial}.

\begin{figure}[t]
\centering
\begin{minipage}[t]{0.28\linewidth}
\centering
\includegraphics[width=\linewidth]{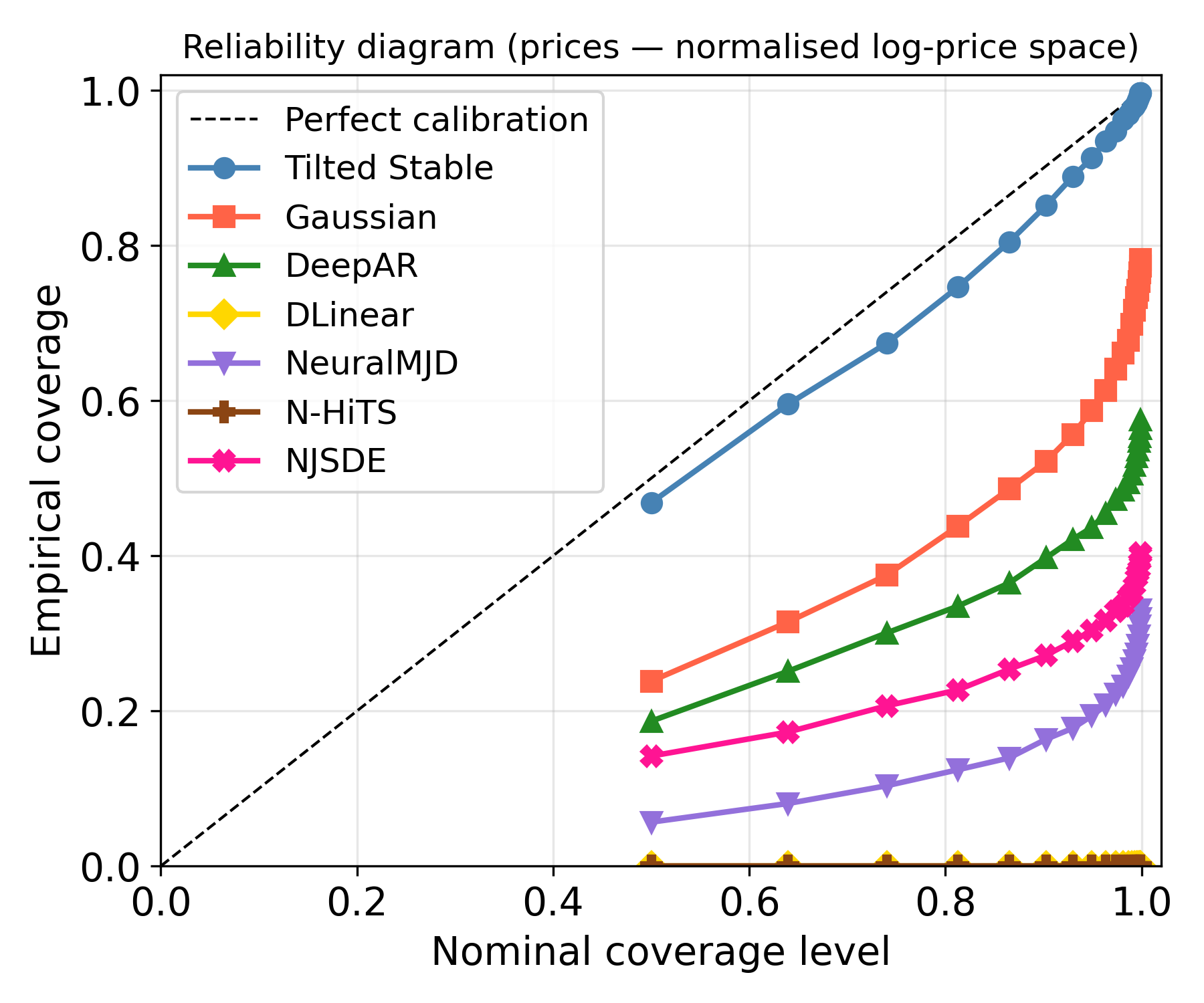}
\end{minipage}%
\hfill%
\begin{minipage}[t]{0.28\linewidth}
\centering
\includegraphics[width=\linewidth]{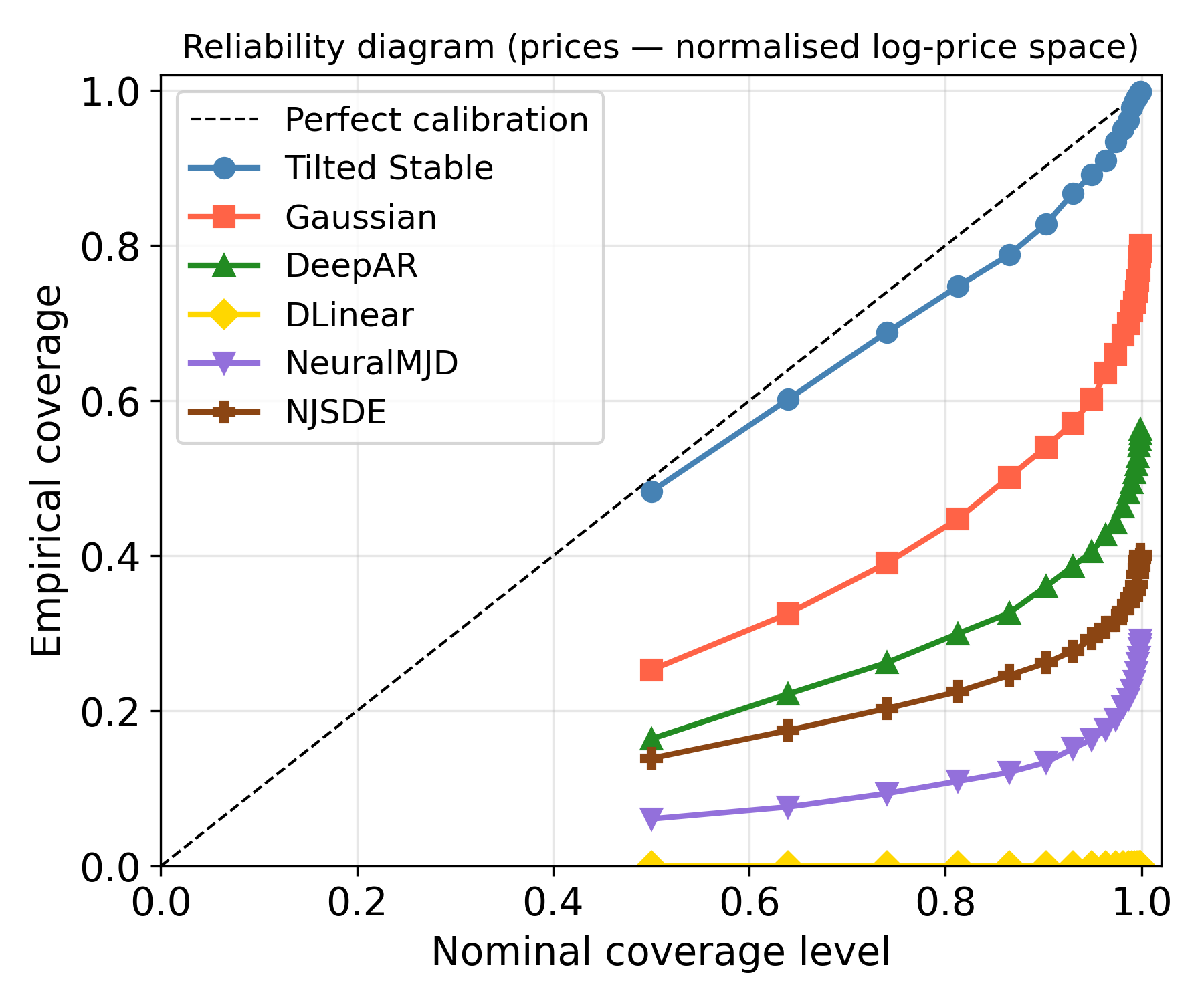}
\end{minipage}%
\hfill%
\begin{minipage}[t]{0.28\linewidth}
\centering
\includegraphics[width=\linewidth]{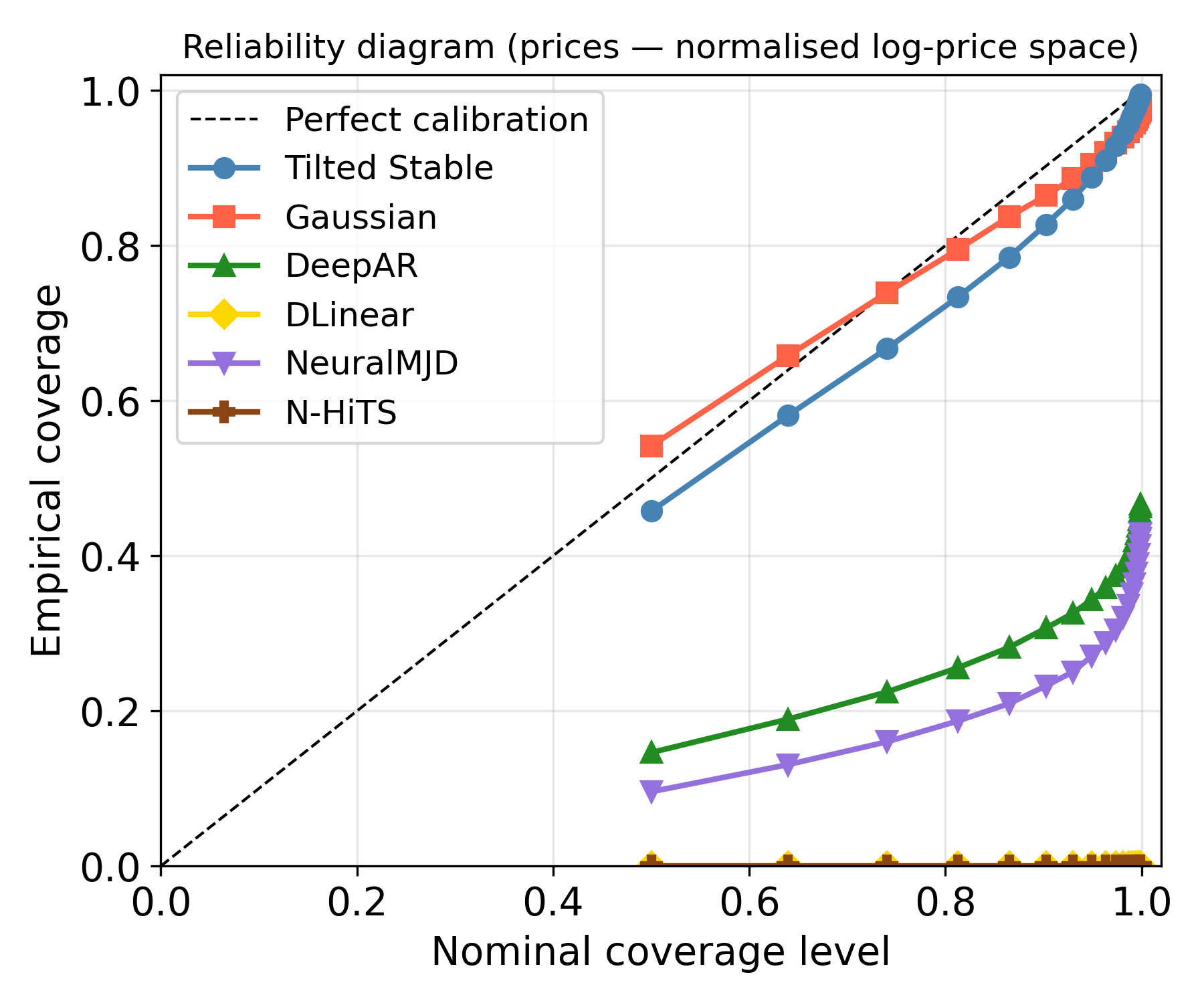}
\end{minipage}
\vspace{-2mm}
\caption{Reliability diagrams.  Empirical coverage vs.\ nominal level; the diagonal
denotes perfect calibration.  TS tracks the diagonal closely across all settings, while
all probabilistic baselines severely under-cover.
\textbf{Left:} NVDA univariate (314 evaluation windows).
\textbf{Centre:} GOOGL univariate (304 windows).
\textbf{Right:} Multivariate ($d{=}10$, 302 windows).\vspace{-4mm}}
\label{fig:reliability_financial}
\vspace{-2mm}
\end{figure}

\paragraph{Overall CRPS}
Our model achieves the lowest mean CRPS ($0.432$), a $21\%$ improvement over the
next-best probabilistic baseline (Neural MJD, $0.545$).  The Gaussian SDE, which shares
the same drift architecture and differs only in the noise model, scores $54\%$ worse
($0.663$), isolating the benefit of heavy-tailed noise.  The deterministic baselines
N-HiTS and DLinear produce point forecasts so their CRPS reduces to MAE; they rank fifth
and sixth.

\paragraph{Tail performance}
\cref{tab:financial_main} shows that our advantage is concentrated in the tails.
The gap over the next-best baseline (DeepAR) grows in absolute terms as the threshold
rises, reaching $0.277$ at $p_{99}$ compared to $0.186$ at $p_{90}$, consistent with
the synthetic finding that a correctly specified heavy-tailed noise model specifically
improves fit to rare large-amplitude moves.

\paragraph{Calibration}
Our model produces well-calibrated predictive intervals: empirical coverage closely
tracks the nominal level across the full range of prediction intervals, as illustrated
by the reliability diagram in~\cref{fig:reliability_financial}.  All probabilistic
baselines severely under-cover at every nominal level, indicating overconfident
predictive distributions.
\vspace{-1.5mm}

\begin{table*}[t]
\caption{Financial forecasting results; lower is better for all metrics.
Best in \textbf{bold}.  $^\dagger$Deterministic model; CRPS equals MAE.
Jump CRPS computed on price increments exceeding the indicated percentile threshold.
\textbf{Left:} Univariate (NVDA), 314 evaluation windows.
\textbf{Right:} Multivariate ($d{=}10$), 302 windows; NJ-SDE excluded (diverged in all runs).}
\label{tab:financial_main}%
\label{tab:multivariate_main}%
\noindent%
\begin{minipage}[t]{0.42\linewidth}
\centering
\resizebox{\linewidth}{!}{%
\setlength{\tabcolsep}{2pt}%
\begin{tabular}{l c cccc}
\toprule
& & \multicolumn{4}{c}{Jump CRPS $\downarrow$} \\
\cmidrule(lr){3-6}
Model & CRPS $\downarrow$ & $p_{90}$ & $p_{95}$ & $p_{97.5}$ & $p_{99}$ \\
\midrule
NJ-SDE             & $0.849{\pm}1.125$           & $1.385$          & $1.615$          & $1.769$          & $1.973$          \\
DLinear$^\dagger$  & $0.802{\pm}0.708$           & $1.357$          & $1.600$          & $1.637$          & $2.086$          \\
N-HiTS$^\dagger$   & $0.746{\pm}0.635$           & $1.406$          & $1.594$          & $1.690$          & $2.108$          \\
Gaussian SDE       & $0.663{\pm}0.984$           & $1.113$          & $1.233$          & $1.436$          & $1.929$          \\
DeepAR             & $0.556{\pm}0.509$           & $1.000$          & $1.145$          & $1.334$          & $1.749$          \\
Neural MJD         & $0.545{\pm}0.530$           & $1.059$          & $1.300$          & $1.495$          & $2.021$          \\
\textbf{TS (ours)} & $\mathbf{0.432}{\pm}0.427$ & $\mathbf{0.814}$ & $\mathbf{0.970}$ & $\mathbf{1.133}$ & $\mathbf{1.472}$ \\
\bottomrule
\end{tabular}%
}
\end{minipage}%
\hspace*{0.02\linewidth}%
\begin{minipage}[t]{0.56\linewidth}
\centering
\resizebox{\linewidth}{!}{%
\setlength{\tabcolsep}{2pt}%
\begin{tabular}{l cc cccc}
\toprule
& & & \multicolumn{4}{c}{Jump CRPS $\downarrow$} \\
\cmidrule(lr){4-7}
Model & CRPS $\downarrow$ & Energy $\downarrow$ & $p_{90}$ & $p_{95}$ & $p_{97.5}$ & $p_{99}$ \\
\midrule
N-HiTS$^\dagger$   & $0.690{\pm}0.297$           & $2.775$          & $1.031$          & $1.132$          & $1.290$          & $1.453$          \\
Gaussian SDE       & $0.556{\pm}0.241$           & $2.237$          & $0.766$          & $0.827$          & $0.926$          & $1.093$          \\
DeepAR             & $0.549{\pm}0.302$           & $2.222$          & $0.874$          & $0.952$          & $1.095$          & $1.261$          \\
DLinear$^\dagger$  & $0.526{\pm}0.260$           & $2.137$          & $0.850$          & $0.952$          & $1.135$          & $1.288$          \\
Neural MJD         & $0.504{\pm}0.260$           & $2.024$          & $0.799$          & $0.899$          & $1.061$          & $1.224$          \\
\textbf{TS (ours)} & $\mathbf{0.499}{\pm}0.247$ & $\mathbf{2.020}$ & $\mathbf{0.725}$ & $\mathbf{0.788}$ & $\mathbf{0.886}$ & $\mathbf{1.015}$ \\
\bottomrule
\end{tabular}%
}
\end{minipage}
\vspace{-3mm}
\end{table*}

\vspace{-3mm}
\subsection{Multivariate financial forecasting}
\label{sec:exp_multivariate}
\vspace{-3mm}
We evaluate on all ten stocks jointly ($d=10$) across 302 rolling evaluation windows,
using the energy score as the primary multivariate scoring rule alongside per-dimension
CRPS.  NJ-SDE diverged in every multivariate run and is excluded.~\cref{tab:multivariate_main} reports the results.

\paragraph{Results}
Our model achieves state-of-the-art average performance, matching Neural MJD on CRPS
($0.499$ vs $0.504$) and energy score ($2.020$ vs $2.024$), with the distinction
emerging in tail performance.

\paragraph{Tail performance}
\cref{tab:multivariate_main} shows our model leads at every jump CRPS threshold,
and the absolute gap over the next-best model (Gaussian SDE) widens from $0.041$ at
$p_{90}$ to $0.078$ at $p_{99}$, consistent with the pattern observed in both the
synthetic and univariate financial experiments.  Neural MJD, despite being near-tied on overall CRPS, drops to third on jump CRPS at
every threshold, suggesting its mixture approximation under-represents the most extreme
joint moves.

\paragraph{Calibration}
Our model remains well-calibrated in the multivariate setting ($50\%$ PI $\to$ $45.8\%$
empirical coverage; $90\%$ PI $\to$ $82.7\%$).  The Gaussian SDE is also reasonably
calibrated ($54.1\%$, $86.4\%$), while Neural MJD and DeepAR severely under-cover
($9.6\%$ and $14.6\%$ at the $50\%$ level respectively), indicating that their
probabilistic forecasts are overconfident.  Reliability diagrams for both GOOGL and the
multivariate setting are shown in~\cref{fig:reliability_financial}.
\vspace{-2mm}
\vspace{-3mm}
\section{Conclusion}
\label{sec:conclusion}
\vspace{-3.5mm}

We introduced a variational inference framework for L\'{e}vy-driven SDEs in which the
approximate posterior is expressed as a tilted L\'{e}vy process, with the prior jump
measure reweighted by a learned exponential factor and the Brownian component acquiring
the standard score correction. Although our experiments focus on stable processes, the
change-of-measure construction applies to the broader class of L\'{e}vy processes,
opening a route to scalable posterior inference in models with general non-Gaussian
jump structure. The quadratic neural parametrisation is the computational device that
makes this tractable, yielding closed-form normalising constants and an exact rejection
sampler for the tilted jump law.

Across all experimental settings the framework delivers $2$--$4\times$ CRPS
improvements over Gaussian baselines on synthetic systems, a $21\%$ improvement over
the next-best probabilistic model on financial data, and the strongest tail performance
at every jump-CRPS threshold in both univariate and multivariate forecasting.
Parameter recovery confirms that without a correctly specified heavy-tailed noise model,
the drift absorbs rare large-amplitude excursions as spurious curvature, distorting
inferred dynamics even when the overall fit appears reasonable.

\paragraph{Limitations and future work}
The quadratic parametrisation tempers posterior tails and may under-represent asymmetric
or multi-modal jump laws; richer flow-based tilting functions are a natural next step.
Our derivation uses a time-discretised change-of-measure argument; integrability conditions for a fully formal treatment are discussed in~\cref{rem:continuous_time_formal}. The multivariate formulation currently assumes independent stable dimensions; incorporating L\'{e}vy copulas would allow cross-dimensional tail dependence to be modelled explicitly. The stability index $\alpha$ is fixed by grid search; joint estimation within the ELBO remains future work.

\begin{ack}
T. B. was supported by a UKRI Future Leaders Fellowship (MR/Y018818/1). The authors acknowledge support from the UK AI Research Resource (AIRR Isambard AI) through grant 0251-4584-0945-1 - TopoFound. 
U.Ş. is partially supported by the French government under the management of Agence Nationale de la Recherche as part of the “Investissements d’avenir” program, reference ANR-19-P3IA-0001 (PRAIRIE 3IA Institute). B.D. and U.Ş. are partially supported by the European Research Council Starting Grant DYNASTY – 101039676.
\end{ack}

\bibliographystyle{plain}
\bibliography{bibliography}

\appendix

\clearpage
\appendix
\section*{Appendix}

\section{Variational derivation of the optimal Markov posterior}
\label{app:variational_derivation}

Both \cref{thm:optimal_posterior_family} and its Corollary derive from a single key
object: the \emph{optimal local transition ratio}, identified by constrained variational
optimisation over Markov path measures.  We first present a proof sketch covering both
results, then give complete step-by-step derivations.

\begin{proof}[Proof sketch]
We begin with a time-discretised decomposition of the KL divergence:
\begin{equation*}
    \KL{Q}{P^\theta} = \sum_{k=0}^{K-1}
    \int q_{t_k}(x)\,D_{\mathrm{KL}}^{t_{k+1},t_k}(x)\,\diff x,
\end{equation*}
where $D_{\mathrm{KL}}^{t_{k+1},t_k}(x) =
\int q_{t_{k+1},t_k}(x'|x)\ln\frac{q_{t_{k+1},t_k}(x'|x)}{p_{t_{k+1},t_k}(x'|x)}\diff x'$.
Incorporating the Chapman--Kolmogorov constraints via Lagrange multipliers $\phi_t(x)$,
the constrained variational optimisation yields the optimal transition ratio
\begin{equation}
\label{eq:optimal_transitions}
    \frac{q_{t+\Delta t,t}(x'|x)}{p_{t+\Delta t,t}(x'|x)}
    = \frac{e^{\phi_t(x')}}{\E_p\!\left[e^{\phi_t(X_{t+\Delta t})} \mid x\right]}.
\end{equation}

\textbf{\Cref{thm:optimal_posterior_family} (posterior generator and tilted L\'{e}vy
measure).}  
The infinitesimal generator of~\cref{eq:prior_sde} for sufficiently smooth test functions $G:\R^d\to\R$ is
\begin{equation}
\label{eq:prior_generator}
    \genP_t G(x)
    = f^\theta_t(x)^\top\nabla G(x)
    + \int_{\R^d}\bigl(G(x+\vy) - G(x)\bigr)\levy_\tau(\diff\vy)
    + \tfrac{1}{2}\mathrm{tr}\bigl(\mD(x)\,\nabla^2 G(x)\bigr).
\end{equation}
This generator characterisation
exists as L\'{e}vy processes are Markov, so their dynamics are completely specified
by their infinitesimal behaviour. When $\sigma=0$, the diffusion term vanishes and the
generator reduces to a drift plus a pure integral operator.

Using the optimal transition ratio to rewrite $\E_q[G(X_{t+\Delta t})\mid x]$
as a ratio of prior expectations, expanding numerator and denominator via the prior
generator expansion, and equating the $O(\Delta t)$ coefficient with $\genQ_t G(x)$ yields
the conjugate relation
\begin{equation*}
    \genQ_t G(x) = e^{-\phi_t(x)}\genP_t\!\bigl(e^{\phi_t(\cdot)}G(\cdot)\bigr)(x)
                 - G(x)\,e^{-\phi_t(x)}\genP_t e^{\phi_t(\cdot)}(x).
\end{equation*}
Substituting the prior generator~\cref{eq:prior_generator} into this relation and
expanding the jump and diffusion parts separately yields~\cref{eq:tilted_generator};
the tilted L\'{e}vy measure~\cref{eq:tilted_measure} is identified directly from the
jump integrand.

\textbf{Corollary (KL formula).}  Expanding the per-step KL using the same ratio and
taking $\Delta t\to 0$ gives
\begin{equation*}
    D_{\mathrm{KL}}^{t+\Delta t,t}(x)
    = \Delta t\Bigl\{
      e^{-\phi_t(x)}\genP_t\!\bigl(e^{\phi_t}\phi_t\bigr)(x)
      - \bigl(1+\phi_t(x)\bigr)e^{-\phi_t(x)}\genP_t e^{\phi_t(\cdot)}(x)
    \Bigr\} + o(\Delta t).
\end{equation*}
Substituting the generator~\cref{eq:prior_generator}, the jump integral contributes
$\int_{\R^d} f(\vy,t,x)\,\levy_\tau(\diff\vy)$ per unit time after simplification.  For the
diffusion part, let $\mathcal{L}^\mathrm{BM}$ denote the Brownian component of $\genP$.
A direct calculation using
$\mathcal{L}^\mathrm{BM}(e^\phi) =
e^\phi\bigl[\tfrac{1}{2}\mathrm{tr}(\mD\nabla^2\phi) + \tfrac{1}{2}\nabla\phi^\top\mD\nabla\phi\bigr]$
and $\mathcal{L}^\mathrm{BM}(e^\phi\phi) =
e^\phi\bigl[(1+\phi)\,\mathcal{L}^\mathrm{BM}\phi + \nabla\phi^\top\mD\nabla\phi\bigr]$
yields, after cancellation of the $(1+\phi)$ terms, the residual contribution
$\tfrac{1}{2}\nabla\phi_t(x)^\top\mD\,\nabla\phi_t(x)$ per unit time.
Summing over time steps, converting to path integrals under $Q$, and taking
$\Delta t\to 0$ yields~\cref{eq:kl_divergence}.
\end{proof}

We now give complete step-by-step derivations.  Step~1 is common to both results;
Step~2 proves \cref{thm:optimal_posterior_family}; Step~3 proves the Corollary.
Throughout, $\phi \equiv \phi_t(x)$ and all generator actions are evaluated at a
fixed $x$ unless otherwise noted.

\paragraph{Step 1: Optimal transition ratio}
We minimise the KL divergence subject to the constraint that $Q$ corresponds to a
valid Markov process.  The Chapman--Kolmogorov equations require
\begin{equation*}
    q_{t+\Delta t}(x') = \int q_{t+\Delta t,t}(x'|x)\,q_t(x)\,\diff x.
\end{equation*}
Introducing Lagrange multipliers $\phi_t(x')$ for each constraint and taking the
functional derivative of the augmented objective with respect to $q_{t+\Delta t,t}(x'|x)$
yields the first-order condition
\begin{equation*}
    \ln\frac{q_{t+\Delta t,t}(x'|x)}{p_{t+\Delta t,t}(x'|x)} = \phi_t(x') - \ln Z_t(x),
\end{equation*}
where $Z_t(x) = \E_p\!\left[e^{\phi_t(X_{t+\Delta t})}\mid x\right]$ is the
normalising constant imposed by the constraint.  Exponentiating gives the optimal
transition ratio~\cref{eq:optimal_transitions}.

\paragraph{Step 2: Proof of \cref{thm:optimal_posterior_family}}
We derive the posterior generator from the optimal transition ratio.  For any test
function $G$, the posterior generator is defined by
\begin{equation*}
    \E_q\!\left[G(X_{t+\Delta t})\mid x\right] = G(x) + \Delta t\,\genQ_t G(x) + o(\Delta t).
\end{equation*}
Using the optimal transition ratio~\cref{eq:optimal_transitions}, we can also write
this expectation as a ratio of prior expectations:
\begin{equation*}
    \E_q\!\left[G(X_{t+\Delta t})\mid x\right]
    = \frac{\E_p\!\left[e^{\phi_t(X_{t+\Delta t})}G(X_{t+\Delta t})\mid x\right]}
           {\E_p\!\left[e^{\phi_t(X_{t+\Delta t})}\mid x\right]}.
\end{equation*}
Applying the prior generator expansion $\E_p[G(X_{t+\Delta t})\mid x] =
G(x) + \Delta t\,\genP_t G(x) + o(\Delta t)$ to numerator and denominator:
\begin{equation*}
    \E_q\!\left[G(X_{t+\Delta t})\mid x\right]
    = \frac{e^{\phi}G(x) + \Delta t\,\genP_t\!\bigl(e^{\phi_t(\cdot)}G(\cdot)\bigr)(x)
            + o(\Delta t)}
           {e^{\phi} + \Delta t\,\genP_t e^{\phi_t(\cdot)}(x) + o(\Delta t)}.
\end{equation*}
Factoring out $e^\phi$ and performing a first-order Taylor expansion of the denominator,
$(1 + \Delta t\,e^{-\phi}\genP_t e^{\phi_t}(x))^{-1} =
1 - \Delta t\,e^{-\phi}\genP_t e^{\phi_t}(x) + o(\Delta t)$, then expanding and
retaining $O(\Delta t)$ terms gives
\begin{equation*}
    \E_q\!\left[G(X_{t+\Delta t})\mid x\right]
    = G(x) + \Delta t\Bigl[
        e^{-\phi}\genP_t\!\bigl(e^{\phi_t}G\bigr)(x)
        - G(x)\,e^{-\phi}\genP_t e^{\phi_t}(x)
      \Bigr] + o(\Delta t).
\end{equation*}
Equating the $O(\Delta t)$ coefficient with $\genQ_t G(x)$ yields the conjugate relation
\begin{equation}
\label{eq:conjugate_relation}
    \genQ_t G(x) = e^{-\phi}\genP_t\!\bigl(e^{\phi_t(\cdot)}G(\cdot)\bigr)(x)
                 - G(x)\,e^{-\phi}\genP_t e^{\phi_t(\cdot)}(x).
\end{equation}
We now substitute the prior generator $\genP_t = \genP_J + \genP_\mathrm{BM} +
f_t^\theta\cdot\nabla$ into~\cref{eq:conjugate_relation}.

\emph{Jump part.}  Setting $H \equiv H_t(x,y) = e^{\phi_t(x+y)-\phi}$:
\begin{align*}
    e^{-\phi}\genP_J(e^\phi G) - G\,e^{-\phi}\genP_J e^\phi
    &= \int_\R\bigl[HG(x+y) - G(x)\bigr]\levy_\tau(\diff y)
     - G\int_\R(H-1)\levy_\tau(\diff y) \\
    &= \int_\R H\bigl(G(x+y)-G(x)\bigr)\levy_\tau(\diff y) \\
    &= \int_\R\bigl(G(x+y)-G(x)\bigr)e^{\phi_t(x+y)-\phi_t(x)}\levy_\tau(\diff y).
\end{align*}
The tilted L\'{e}vy measure is identified directly from this integrand as
$\tlevy(\diff y,t,x) = e^{\phi_t(x+y)-\phi_t(x)}\levy_\tau(\diff y)$,
establishing~\cref{eq:tilted_measure}.

\emph{Diffusion part.}  Since $\partial_i\partial_j(e^\phi G) =
e^\phi[\partial_i\phi\partial_j\phi G + \partial_i\partial_j\phi G +
\partial_i\phi\partial_j G + \partial_j\phi\partial_i G + \partial_i\partial_j G]$:
\begin{equation*}
    e^{-\phi}\genP_\mathrm{BM}(e^\phi G) - G\,e^{-\phi}\genP_\mathrm{BM}(e^\phi)
    = \tfrac{1}{2}\sum_{ij}D_{ij}\bigl[
        \partial_i\phi\,\partial_j G + \partial_j\phi\,\partial_i G
        + \partial_i\partial_j G
      \bigr].
\end{equation*}
With $\mD$ symmetric, $\sum_{ij}D_{ij}\partial_j\phi\,\partial_i G =
(\mD\nabla\phi)^\top\nabla G$, so the diffusion contribution is
$(\mD\nabla\phi)^\top\nabla G + \tfrac{1}{2}\mathrm{tr}(\mD\,\nabla^2 G)$.

\emph{Drift part.}  The drift term passes through the conjugate relation unchanged, since
$e^{-\phi}(f_t^\theta\cdot\nabla)(e^\phi G) - G\,e^{-\phi}(f_t^\theta\cdot\nabla)e^\phi
= f_t^\theta\cdot\nabla G$.

Adding all three contributions confirms~\cref{eq:tilted_generator}.

\paragraph{Step 3: Proof of the Corollary}
With the optimal transition ratio in hand, the per-step KL is
\begin{equation*}
    D_{\mathrm{KL}}^{t+\Delta t,t}(x)
    = \E_q\!\left[\phi_t(X_{t+\Delta t})\mid x\right] - \ln Z_t(x).
\end{equation*}
Applying the generator expansion
$\E_p[G(X_{t+\Delta t})\mid x] = G(x) + \Delta t\,\genP_t G(x) + o(\Delta t)$
to both terms:
\begin{align*}
    \E_q\!\left[\phi_t(X_{t+\Delta t})\mid x\right]
    &= \frac{\E_p\!\left[e^{\phi_t}
       \phi_t(X_{t+\Delta t})\mid x\right]}{\E_p\!\left[e^{\phi_t(X_{t+\Delta t})}\mid x\right]} \\
    &= \phi + \Delta t\Bigl[
        e^{-\phi}\genP_t(e^{\phi_t(\cdot)}\phi_t(\cdot))(x)
        - \phi\,e^{-\phi}\genP_t e^{\phi_t(\cdot)}(x)
      \Bigr] + o(\Delta t), \\[4pt]
    \ln Z_t(x)
    &= \phi + \Delta t\,e^{-\phi}\genP_t e^{\phi_t(\cdot)}(x) + o(\Delta t).
\end{align*}
Subtracting gives
\begin{equation}
\label{eq:perstep_kl}
    D_{\mathrm{KL}}^{t+\Delta t,t}(x)
    = \Delta t\Bigl\{
        e^{-\phi}\genP_t(e^{\phi_t}\phi_t)(x)
        - (1+\phi)\,e^{-\phi}\genP_t e^{\phi_t}(x)
      \Bigr\} + o(\Delta t).
\end{equation}

\emph{Jump contribution.}
Let $\genP_J G(x) = \int_\R (G(x+y)-G(x))\,\levy_\tau(\diff y)$ be the jump part
of the generator.  We evaluate each term in~\cref{eq:perstep_kl} for $\genP_J$:
\begin{align*}
    e^{-\phi}\genP_J(e^{\phi_t}\phi_t)(x)
    &= \int_\R\Bigl[e^{\phi_t(x+y)-\phi}\phi_t(x+y) - \phi\Bigr]\levy_\tau(\diff y), \\
    (1+\phi)\,e^{-\phi}\genP_J e^{\phi_t}(x)
    &= (1+\phi)\int_\R\Bigl[e^{\phi_t(x+y)-\phi} - 1\Bigr]\levy_\tau(\diff y).
\end{align*}
Setting $H \equiv H_t(x,y) = e^{\phi_t(x+y)-\phi}$ and subtracting:
\begin{align*}
    e^{-\phi}\genP_J(e^{\phi_t}\phi_t)(x)
    &- (1+\phi)\,e^{-\phi}\genP_J e^{\phi_t}(x) \\
    &= \int_\R\Bigl[H\phi_t(x+y) - \phi - (1+\phi)(H-1)\Bigr]\levy_\tau(\diff y) \\
    &= \int_\R\Bigl[H\phi_t(x+y) - H\phi - H + 1\Bigr]\levy_\tau(\diff y) \\
    &= \int_\R\Bigl[H(\phi_t(x+y)-\phi) - H + 1\Bigr]\levy_\tau(\diff y) \\
    &= \int_\R\Bigl[H\ln H - H + 1\Bigr]\levy_\tau(\diff y)
    = \int_\R f(y,t,x)\,\levy_\tau(\diff y),
\end{align*}
where the penultimate equality uses $\ln H = \phi_t(x+y) - \phi$.

\emph{Diffusion contribution.}
Let $\genP_\mathrm{BM} G(x) = \tfrac{1}{2}\mathrm{tr}(\mD\,\nabla^2 G(x))$ be the
Brownian part.  We compute each factor by applying the product rule twice.

For $\genP_\mathrm{BM}(e^\phi)$: since $\partial_i\partial_j e^\phi =
e^\phi(\partial_i\phi\partial_j\phi + \partial_i\partial_j\phi)$,
\begin{equation*}
    e^{-\phi}\genP_\mathrm{BM}(e^\phi)
    = \tfrac{1}{2}\nabla\phi^\top\!\mD\,\nabla\phi
    + \tfrac{1}{2}\mathrm{tr}(\mD\,\nabla^2\phi).
\end{equation*}
For $\genP_\mathrm{BM}(e^\phi\phi)$: since
$\partial_j(e^\phi\phi) = e^\phi\partial_j\phi(1+\phi)$, applying $\partial_i$ gives
$\partial_i\partial_j(e^\phi\phi) = e^\phi[\partial_i\phi\partial_j\phi(2+\phi) +
\partial_i\partial_j\phi(1+\phi)]$, so
\begin{equation*}
    e^{-\phi}\genP_\mathrm{BM}(e^\phi\phi)
    = (2+\phi)\,\tfrac{1}{2}\nabla\phi^\top\!\mD\,\nabla\phi
    + (1+\phi)\,\tfrac{1}{2}\mathrm{tr}(\mD\,\nabla^2\phi).
\end{equation*}
Subtracting $(1+\phi)$ times the first from the second, the
$\mathrm{tr}(\mD\,\nabla^2\phi)$ terms cancel exactly:
\begin{equation*}
    e^{-\phi}\genP_\mathrm{BM}(e^\phi\phi) - (1+\phi)\,e^{-\phi}\genP_\mathrm{BM}(e^\phi)
    = \tfrac{1}{2}(2+\phi-(1+\phi))\,\nabla\phi^\top\!\mD\,\nabla\phi
    = \tfrac{1}{2}\nabla\phi^\top\!\mD\,\nabla\phi.
\end{equation*}

\emph{Combining and taking the limit.}
Adding the jump and diffusion contributions, the full per-step KL~\cref{eq:perstep_kl} becomes
\begin{equation*}
    D_{\mathrm{KL}}^{t+\Delta t,t}(x)
    = \Delta t\left\{
        \int_\R f(y,t,x)\,\levy_\tau(\diff y)
        + \tfrac{1}{2}\nabla\phi_t(x)^\top\!\mD\,\nabla\phi_t(x)
      \right\} + o(\Delta t).
\end{equation*}
Summing over time steps and converting the sum to a path integral under $Q$ via
$\sum_k q_{t_k}(x)\Delta t \to \E_Q[\,\cdot\,]$ in the limit $\Delta t\to 0$
yields~\cref{eq:kl_divergence}.

\begin{remark}[Formal status of the continuous-time limit]
\label[remark]{rem:continuous_time_formal}
The derivation passes from the discrete-time KL decomposition to the continuous-time
path integral~\cref{eq:kl_divergence} via $\Delta t\to 0$.  A fully rigorous treatment
requires two things: (i)~establishing absolute continuity of $Q$ with respect to
$P^\theta$ on path space, so that $dQ/dP^\theta$ is well-defined; and (ii)~verifying
convergence of the discrete KL sum to the path integral.

For~(i), the Radon-Nikodym derivative takes the form of a Dol\'{e}ans-Dade stochastic
exponential of a local martingale, which must be shown to be a true martingale rather
than merely a local one.  For Brownian SDEs, the Girsanov theorem accomplishes this via
the Novikov condition
\begin{equation*}
  \E_P\!\left[\exp\!\left(\tfrac{1}{2}\int_0^T
  \nabla\phi_t(X_t)^\top\mD\,\nabla\phi_t(X_t)\,\diff t\right)\right]<\infty;
\end{equation*}
the exponent matches the integrand of the quadratic KL term in~\cref{eq:kl_divergence}.
For pure-jump processes, the analogous requirement is a Novikov-type condition on the
stochastic exponential now driven by the tilting factor $H_t$ rather than
$\nabla\phi_t$; the complete theory is developed in~\cite{jacod2013limit}.  The
$A_t<0$ constraint ensures the tilted measure has finite total mass and all finite
moments, consistent with such integrability requirements, but a complete formal proof is
left to future work.
\end{remark}

\section{Conditionally Gaussian representations of L\'{e}vy measures}
\label{app:cond_gaussian}

It can be shown that a disintegration of a L\'{e}vy measure $\levy(\diff y)$ can be
formed such that~\citep{rosinski2001series}
\begin{equation*}
    \levy(\diff y) = \int_0^\infty \sigma(r; \diff y)\,\pi(r)\,\diff r,
\end{equation*}
where $\sigma(\cdot;\cdot)$ is a probability kernel for fixed $r$ and $\pi(r)$ is the
L\'{e}vy measure of a subordinator process.  Hence $\levy(\diff y)$ is modelled as a
mixture measure with mixing measure $\pi(r)$.  This implies that a sequence of jumps
$\{y_i\}$ can be represented as conditionally independent random variables given
$\{r_i\}$.

\paragraph{Symmetric stable processes}
For the symmetric $\alpha$-stable case the mixing measure is $\pi(r) = r^{-1-\alpha}$
(corresponding to a non-negative stable subordinator).  Choosing the probability kernel
as a zero-mean Gaussian $\Normal{0}{r^2\sigma_G^2}$~\citep{lemke2015fully}, the
disintegration for a single jump $y$ is
\begin{equation*}
    \levy(y)\,\diff y
    \propto \int_0^\infty \Normal{y}{r^2\sigma_G^2}\,r^{-1-\alpha}\,\diff r\,\diff y.
\end{equation*}
In practice we use a truncated mixing measure $\pi_\tau(r) = \alpha\tau^\alpha\,r^{-1-\alpha}$
(supported on $[\tau,\infty)$), yielding a proper probability density.

\paragraph{Tilted symmetric stable processes}
The L\'{e}vy measure of the tilted process is
$\tlevy(y, t, X_t) = e^{\phi_t(X_t+y) - \phi_t(X_t)}\levy(\diff y)$.  Introducing
the conditionally Gaussian disintegration and the quadratic parametrization
of~\cref{eq:quadratic_phi}, the normalizing constant of the conditional kernel is
\begin{equation*}
    C(r, t, X_t) = \int_{\R}
    e^{\phi_t(X_t+y)-\phi_t(X_t)}\,\Normal{y}{r^2\sigma_G^2}\,\diff y.
\end{equation*}
With $K_1 = 2A_t X_t + B_t$ and $K_2 = A_t - \frac{1}{2r^2\sigma_G^2}$, completing
the square gives the closed form in the main text.  The normalizing
constant for the truncated tilted mixing measure is
$K(\alpha, t, X_t) = \int_\tau^\infty C(r, t, X_t)\,r^{-1-\alpha}\,\diff r$,
and the resulting probability density is
\begin{equation*}
    \tlevy_\tau(y, t, X_t) = \int_\tau^\infty
    \tilde{\sigma}(r; y, t, X_t)\,\tilde{\pi}_\tau(r, t, X_t)\,\diff r,
\end{equation*}
with $\tilde{\sigma}$ and $\tilde{\pi}_\tau$ as defined in~\cref{thm:cond_gaussian}.
One can verify $\int_{\R}\tlevy_\tau(\diff y, t, X_t) = 1$ by exchanging the order of
integration and using the fact that $\tilde{\sigma}$ is a probability kernel.

\section{Exact rejection sampler for the tilted mixing measure}
\label{app:rejection_sampler}

\begin{proof}[Proof of \cref{thm:rejection_sampler}]
We derive a uniform envelope for $C(r,t,X_t)$ under the quadratic
parametrisation~\cref{eq:quadratic_phi}.  From~\cref{rem:tractability},
with $K_1 = 2A_t X_t + B_t$ and $K_2 = A_t - \tfrac{1}{2r^2\sigma_G^2}$,
\begin{equation*}
    C(r,t,X_t) = \frac{1}{\sqrt{-2K_2 r^2\sigma_G^2}}
    \exp\!\left(-\frac{K_1^2}{4K_2}\right).
\end{equation*}
Introduce $u = 2|A_t|r^2\sigma_G^2 \geq 0$.  Since $A_t < 0$, we have
$-2K_2 r^2\sigma_G^2 = 1 + u$ and $-K_1^2/(4K_2) = K_1^2 u\,/\,(4|A_t|(1+u))$, giving
\begin{equation*}
    C(r,t,X_t) = \frac{1}{\sqrt{1+u}}
    \exp\!\left(\frac{K_1^2}{4|A_t|}\cdot\frac{u}{1+u}\right).
\end{equation*}
For all $u \geq 0$, both $1/\sqrt{1+u} \leq 1$ and $u/(1+u) \leq 1$, so
\begin{equation*}
    C(r,t,X_t) \leq \exp\!\left(\frac{K_1^2}{4|A_t|}\right) =: M(t,X_t).
\end{equation*}
The bound $M(t,X_t)$ is independent of $r$ and hence a valid envelope.
Proposing $r^* \sim q(r) \propto r^{-1-\alpha}\,\mathbf{1}_{r \geq \tau}$
via the inverse-CDF transform and accepting with probability
$C(r^*,t,X_t)/M(t,X_t)$ yields an exact draw from
$\tilde{\pi}_\tau(r,t,X_t) \propto C(r,t,X_t)\,r^{-1-\alpha}$
by standard rejection-sampling correctness.
\end{proof}

\vspace{-4mm}
\section{Analytical integration of the KL integrand}
\label{app:kl_integrals}
\vspace{-2mm}

To compute the KL loss component at each time step, the following integral must be
evaluated:
\begin{equation}
\label{eq:target_integral_appendix}
    \mathcal{I}(t, X_t)
    = \int_\tau^\infty \bigl(f(y, t, X_t) + f(-y, t, X_t)\bigr)\,y^{-1-\alpha}\,\diff y,
\end{equation}
where $f(y, t, X_t) = H_t(X_t, y)\ln H_t(X_t, y) - H_t(X_t, y) + 1$ and
$H_t(x, y) = \exp(A_t(2xy + y^2) + B_t y)$ under the quadratic parametrization.
Setting $K_1 = 2A_t X_t + B_t$, the integrand decomposes into four components:
\begin{align*}
    \mathcal{I}_1^+(t, X_t) &= K_1\!\int_\tau^\infty e^{K_1 y + A_t y^2}\,y^{-\alpha}\,\diff y, \\
    \mathcal{I}_2^+(t, X_t) &= A_t\!\int_\tau^\infty e^{K_1 y + A_t y^2}\,y^{1-\alpha}\,\diff y, \\
    \mathcal{I}_3^+(t, X_t) &= \int_\tau^\infty e^{K_1 y + A_t y^2}\,y^{-1-\alpha}\,\diff y, \\
    \mathcal{I}_4^+(t, X_t) &= \int_\tau^\infty y^{-1-\alpha}\,\diff y = \frac{\tau^{-\alpha}}{\alpha}.
\end{align*}
Each of $\mathcal{I}_1^\pm$, $\mathcal{I}_2^\pm$, $\mathcal{I}_3^\pm$ can be expressed
as a convergent series using the substitution $\eta = A_t y^2$ (with $A_t < 0$) and
upper incomplete gamma functions.  Expanding $e^{K_1 y} = \sum_{n=0}^\infty
\frac{K_1^n}{n!}y^n$ yields
\begin{align*}
    \mathcal{I}_1^+(t, X_t) &= \frac{K_1}{2}\sum_{n=0}^\infty
      \frac{K_1^n}{n!}(-A_t)^{-\frac{n-\alpha+1}{2}}
      \,\Gamma\!\left(\tfrac{n-\alpha+1}{2},\,-A_t\tau^2\right),
\end{align*}
and analogously for the remaining terms.  The total integral at each step is
$\mathcal{I}(t, X_t) = \mathcal{I}^+(t, X_t) + \mathcal{I}^-(t, X_t)$, where
$\mathcal{I}^\pm$ denotes the contributions from $+y$ and $-y$ respectively, truncated
to a finite number of series terms in practice.

\paragraph{Comparison with Monte Carlo estimation}
This series representation provides an exact alternative to~\cref{alg:kl_loss_approximation}.  In practice, however, for the small truncation
thresholds $\tau$ used in our experiments, the series estimator exhibits larger variance
than direct Monte Carlo.  When $|K_1|/\sqrt{|A_t|}$ is non-negligible, early terms in
the series can be large in magnitude before the factorial denominator dominates, causing
numerical instability that persists even after series truncation.~\cref{alg:kl_loss_approximation}
is therefore used throughout this paper; the series representation is retained here as a
reference and may be preferable when $\tau$ is large relative to the jump scale.

\vspace{-2mm}
\section{Extension to state-dependent jump sizes}
\label{app:state_dep}
\vspace{-2mm}

The derivation in~\cref{sec:variational_framework} assumes that jump sizes
enter the prior SDE additively and independently of the current state.  Here we show
that the variational framework extends to the more general prior
\begin{equation}
\label{eq:prior_sde_statedep}
    \diff X_t = f_t^\theta(X_t)\,\diff t + \sigma(X_t)\,\diff B_t
              + \int_\R \gamma(X_{t^-})\,y\,\tilde{N}(\diff y,\diff t),
\end{equation}
where $\gamma:\R\to\R_{>0}$ is a smooth state-dependent scaling function and
$\tilde{N}$ is the compensated Poisson random measure with intensity
$\levy_\tau(\diff y)\,\diff t$.  The infinitesimal generator of~\cref{eq:prior_sde_statedep}
is
\begin{equation}
\label{eq:prior_generator_statedep}
    \genP_t G(x)
    = f_t^\theta(x)\,\nabla G(x)
    + \tfrac{1}{2}\mathrm{tr}(\mD\,\nabla^2 G(x))
    + \int_\R\bigl(G(x+\gamma(x)y)-G(x)\bigr)\levy_\tau(\diff y).
\end{equation}

\begin{prop}[Tilted variational family under state-dependent jump sizes]
\label{prop:state_dep}
For a prior with generator~\cref{eq:prior_generator_statedep}, the KL divergence
between the prior and the optimal Markov posterior is
\begin{equation}
\label{eq:kl_statedep}
    \KL{Q}{P^\theta}
    = \E_Q\!\left[
        \int_0^T\!\int_\R f_\gamma(y,t,X_t)\,\levy_\tau(\diff y)\,\diff t
        + \frac{1}{2}\int_0^T \nabla\phi_t(X_t)^\top\mD\,\nabla\phi_t(X_t)\,\diff t
      \right],
\end{equation}
where $f_\gamma(y,t,x) = H_\gamma\ln H_\gamma - H_\gamma + 1$ with
$H_\gamma \equiv H_t^\gamma(x,y) = e^{\phi_t(x+\gamma(x)y)-\phi_t(x)}$.
The posterior generator is
\begin{equation}
\label{eq:tilted_generator_statedep}
\begin{aligned}
    \genQ_t G(x)
    &= \bigl(f_t^\theta(x)+\mD\nabla\phi_t(x)\bigr)^\top\!\nabla G(x)
     + \tfrac{1}{2}\mathrm{tr}(\mD\,\nabla^2 G(x)) \\
    &\quad + \int_\R\bigl(G(x+\gamma(x)y)-G(x)\bigr)\,
             e^{\phi_t(x+\gamma(x)y)-\phi_t(x)}\levy_\tau(\diff y),
\end{aligned}
\end{equation}
and the variational SDE is
\begin{equation}
\label{eq:variational_sde_statedep}
    \diff X_t = \bigl(f_t^\theta(X_t)+\mD\nabla\phi_t(X_t)\bigr)\diff t
              + \sigma(X_t)\,\diff B_t
              + \int_\R \gamma(X_{t^-})\,y\,\tilde{N}^\phi(\diff y,\diff t),
\end{equation}
where $\tilde{N}^\phi$ is the compensated Poisson random measure with tilted intensity
$e^{\phi_t(X_{t^-}+\gamma(X_{t^-})y)-\phi_t(X_{t^-})}\levy_\tau(\diff y)\,\diff t$.
\end{prop}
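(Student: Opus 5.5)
The plan is to rerun the three-step argument of \cref{app:variational_derivation} with the prior generator~\cref{eq:prior_generator_statedep} in place of~\cref{eq:prior_generator}. The derivation there is structured so that the specific form of the jump operator enters only at the final substitution stage. Step~1 (the optimal transition ratio~\cref{eq:optimal_transitions}) uses only the Markov property and the Chapman--Kolmogorov constraints, so it carries over verbatim. The same holds for the conjugate relation~\cref{eq:conjugate_relation} and the per-step KL expansion~\cref{eq:perstep_kl}: both are stated abstractly in terms of $\genP_t$ and need only the small-time expansion $\E_p[G(X_{t+\Delta t})\mid x]=G(x)+\Delta t\,\genP_tG(x)+o(\Delta t)$. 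I would therefore first record that~\cref{eq:prior_generator_statedep} is the generator of~\cref{eq:prior_sde_statedep}, so that this expansion holds for $G$ smooth and bounded, and for $G=e^{\phi_t}$ and $G=e^{\phi_t}\phi_t$.

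The key step is then the evaluation of the jump operator $\genP_J^\gamma G(x)=\int(G(x+\gamma(x)y)-G(x))\levy_\tau(\diff y)$ in both identities. The essential observation is that for fixed $x$ this is an ordinary integral over $y$ of a function evaluated at the shifted point $x+\gamma(x)y$, and $\gamma(x)$ is merely a constant.

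In the conjugate relation, setting $H_\gamma=e^{\phi_t(x+\gamma(x)y)-\phi_t(x)}$ gives
\[
\int\bigl[H_\gamma G(x+\gamma(x)y)-G(x)\bigr]\levy_\tau(\diff y)-G(x)\int(H_\gamma-1)\levy_\tau(\diff y)=\int H_\gamma\bigl(G(x+\gamma(x)y)-G(x)\bigr)\levy_\tau(\diff y).
\]
This yields the jump part of~\cref{eq:tilted_generator_statedep}. In~\cref{eq:perstep_kl}, the identical algebra with $\ln H_\gamma=\phi_t(x+\gamma(x)y)-\phi_t(x)$ produces $\int(H_\gamma\ln H_\gamma-H_\gamma+1)\levy_\tau(\diff y)$.

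The drift and Brownian computations are unchanged, since $\gamma$ does not enter them. They again give the correction $\mD\nabla\phi_t$ and the KL term $\tfrac12\nabla\phi_t^\top\mD\nabla\phi_t$. Summing over time steps and passing to $\Delta t\to0$ as before gives~\cref{eq:kl_statedep}.

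To obtain the SDE~\cref{eq:variational_sde_statedep}, I would identify the generator~\cref{eq:tilted_generator_statedep} with that of a process driven by a Poisson random measure with intensity $H_\gamma\,\levy_\tau(\diff y)\,\diff t$ and jump map $y\mapsto\gamma(X_{t^-})y$. Compensation only shifts the drift by a finite term, because $\levy_\tau$ is truncated and finite; that term should be absorbed or matched to the convention used in the prior.

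I expect the main obstacle to be this compensation and integrability bookkeeping, not the algebra. Specifically, one must check that $\int H_\gamma\,|y|\,\levy_\tau(\diff y)<\infty$, so that the compensated tilted measure is well defined, and this now depends on $\gamma(x)$. Under the quadratic $\phi_t$ one gets $H_\gamma=\exp(A_t(2x\gamma y+\gamma^2y^2)+B_t\gamma y)$. This is Gaussian-decaying in $y$ whenever $A_t<0$ and $\gamma(x)>0$, so finiteness follows just as for $\gamma\equiv1$; this is where the positivity of $\gamma$ is used. Beyond that, the rigour of the $\Delta t\to0$ limit would be deferred exactly as in \cref{rem:continuous_time_formal}.
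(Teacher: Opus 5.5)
For the KL formula and the posterior generator your argument is the paper's. You redo the derivation of \cref{app:variational_derivation} with $G(x+y)\to G(x+\gamma(x)y)$, treating $\gamma(x)$ as a constant at fixed $x$. You also correctly invoke the conjugate relation of Step~2, which \cref{eq:tilted_generator_statedep} needs even though the paper's one-line proof cites only Steps~1 and~3.

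The gap is in the passage to \cref{eq:variational_sde_statedep}, where you leave the compensator to be ``absorbed or matched to the convention used in the prior.'' It cannot be absorbed, because the tilt destroys the symmetry of $\levy_\tau$. Under the quadratic $\phi_t$, with $K_1=2A_tx+B_t$,
\[
  b_\phi(x) := \gamma(x)\int_\R y\,H_\gamma(x,y)\,\levy_\tau(\diff y)
  = 2\gamma(x)\int_\tau^\infty y^{-\alpha}e^{A_t\gamma(x)^2y^2}\sinh\bigl(K_1\gamma(x)\,y\bigr)\,\diff y ,
\]
which is nonzero whenever $K_1\neq 0$. By It\^o's formula, an SDE with drift $f_t^\theta+\mD\nabla\phi_t$ driven by the measure compensated with respect to the tilted intensity $H_\gamma\levy_\tau\,\diff t$ has generator equal to \cref{eq:tilted_generator_statedep} minus $b_\phi(x)\nabla G(x)$. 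So it is not the process you derived.

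Since the drift in \cref{eq:variational_sde_statedep} is fixed, your generator identifies the posterior with the \emph{uncompensated} tilted measure $N^\phi$. Equivalently, it is the compensated measure with an extra drift $b_\phi(X_{t^-})$. This is also the convention \cref{alg:tilted_levy_simulation} uses, since it adds raw jump sums. Read literally, the SDE in the statement is inconsistent with its own generator, and your proof should say so rather than hedge.

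The prior escapes this only because $\int y\,\levy_\tau(\diff y)=0$ by symmetry, with absolute convergence for $\alpha\in(1,2)$. That is exactly the fact you need when you record that \cref{eq:prior_generator_statedep} is the generator of \cref{eq:prior_sde_statedep}. The paper's proof does not address the SDE at all, so this is an issue your route exposes rather than one the paper resolves.

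Two smaller corrections:
\begin{itemize}
\item Finiteness of $b_\phi$ comes from the Gaussian decay of $H_\gamma$, not from $\levy_\tau$ having finite mass; the first moment of $\levy_\tau$ diverges for $\alpha\le 1$.
\item That decay needs only $\gamma(x)\neq 0$, not positivity, since $\gamma$ enters through $A_t\gamma(x)^2y^2$.
\end{itemize}
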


\begin{proof}
The proof follows Steps~1 and~3 of~\cref{app:variational_derivation} verbatim, with
the sole substitution $G(x+y)\to G(x+\gamma(x)y)$ throughout the jump generator.  The
Brownian contribution is unchanged.  For the jump contribution, the same cancellation
as the jump calculation in Step~3 gives $\int_\R f_\gamma(y,t,x)\,\levy_\tau(\diff y)$ with
$H_\gamma = e^{\phi_t(x+\gamma(x)y)-\phi_t(x)}$ in place of $H$.
\end{proof}

\paragraph{Simulation}
The conditionally Gaussian disintegration of~\cref{thm:cond_gaussian} adapts to the
state-dependent case: the tilted kernel becomes a Gaussian in $y$ with mean and
variance scaled by $\gamma(x)^{-1}$ and $\gamma(x)^{-2}$ respectively.  The quadratic
parametrization and the rejection sampler of~\cref{sec:simulation} remain tractable
under this scaling with straightforward modifications.  All experiments in this
paper use $\gamma(\cdot)\equiv 1$, which recovers the formulation of~\cref{sec:method}; adapting the simulation to $\gamma(\cdot)\neq 1$ is left to
future work.
\vspace{-2mm}
\section{Connection to score-based generative modelling}
\vspace{-2mm}
The variational family has a precise structural connection to score-based generative
modelling. For Gaussian SDEs the drift correction $\mD\nabla\phi_t$ equals the score
of the marginal density, enabling learned reverse processes. The analogous formula for
L\'{e}vy systems replaces the jump measure by $[p_t(x+\vy)/p_t(x)]\,\levy(\diff\vy)$;
at optimum, the tilted measure derived here reduces to exactly this ratio, identifying
neural tilting as the structural backbone of L\'{e}vy diffusion generative models.

Two recent works have constructed L\'{e}vy generative models.
\cite{yoon2023score} derive an exact reverse-time SDE for an isotropic $\alpha$-stable
forward process, in which the learnt correction enters only the reverse drift via a
fractional score function of the current state, whilst the driving noise in the reverse
process remains a fixed isotropic $\alpha$-stable process.
\cite{shariatian2025denoising} take a discrete-time route, replacing Gaussian increments
in DDPM with $\alpha$-stable ones and using a normal variance-mixture representation of
$\alpha$-stable random variables to recover tractable backward kernels; here too the
background jump law is fixed and the generative capability resides in the denoising
function.
A generative model built on the variational family of~\cref{thm:optimal_posterior_family}
would differ structurally: the reverse process would carry a state-conditioned tilted
jump measure $e^{\phi_t(x+\vy)-\phi_t(x)}\,\levy(\diff\vy)$, so the jump law itself
adapts to the current state rather than remaining a fixed isotropic noise corrected by a
learnt drift.

\begin{remark}[Joint parametrization in the jump-diffusion setting]
In the joint setting ($\sigma \neq 0$), the Brownian drift correction $\mD\nabla\phi_t(X_t)$ and the jump tilting factor $e^{\phi_t(X_t+\vy)-\phi_t(X_t)}$ are both determined by the same potential $\phi_t$. This coupling is a structural consequence of~\cref{thm:optimal_posterior_family}: independently parametrizing the two channels breaks the shared $\phi_t$ structure and forfeits the optimality guarantee. Under the quadratic parametrization of~\cref{sec:quadratic}, both the gradient $\nabla\phi_t$ and the increment $\phi_t(x+y)-\phi_t(x)$ are linear functions of the same neural outputs $(A_t, B_t)$, so joint estimation of the Brownian and jump corrections is automatic. Developing richer parametrizations that balance the two channels with greater flexibility while preserving the Markov optimality structure is a natural direction for future work.
\end{remark}

\vspace{-2mm}
\section{Implementation Details}
\label{app:implementation}
\vspace{-2mm}

All tilted stable (TS) and Gaussian SDE experiments were implemented in JAX~\citep{jax2018github}, and parameter updates were carried out using the Optax optimisation library~\citep{deepmind2020jax}.  Financial data were sourced via the \texttt{yfinance} API~\citep{aroussi2019yfinance}.

\subsection{Wall-clock training time}
To give a practical sense of the computational cost, we report representative
wall-clock training times measured on a single NVIDIA GH200 Grace--Hopper GPU.
For the synthetic
experiments, times are reported per dataset realisation; for the financial
experiments, times are reported per rolling train-and-forecast window.  The
TS and Gaussian SDE models use the same optimisation protocol,
so their runtimes are directly comparable and isolate the additional cost of
heavy-tailed posterior inference.  The remaining baselines are included as
context, though they solve different optimisation problems and belong to
different model classes.

\begin{wraptable}[22]{r}{0.6\textwidth}
  \centering
\vspace{-5mm}
\caption{Representative single-GPU wall-clock training times.  Synthetic times
are reported per run (3000 optimisation steps).  Financial times are reported
per rolling window.  The multivariate NJ-SDE time is estimated from the 90 of
302 windows that completed.}
\label{tab:runtime}
\centering
\footnotesize
\setlength{\tabcolsep}{4pt}
\begin{tabular}{l l c}
\toprule
Setting & Model & Time \\
\midrule
Synthetic (OU / double well) & TS (ours) & $\sim 122$ min/run \\
Synthetic (OU / double well) & Gaussian SDE & $\sim 25$--$26$ min/run \\
\midrule
Financial, univariate & TS (ours) & $\sim 84$ min/window \\
Financial, univariate & Gaussian SDE & $\sim 20$ min/window \\
Financial, univariate & DeepAR & $\sim 1.2$ min/window \\
Financial, univariate & NJ-SDE & $\sim 2.4$ min/window \\
Financial, univariate & Neural MJD & $\sim 0.3$ min/window \\
Financial, univariate & N-HiTS & $\sim 0.3$ min/window \\
Financial, univariate & DLinear & $< 1$ s/window \\
\midrule
Financial, multivariate ($d=10$) & TS (ours) & $\sim 247$ min/window \\
Financial, multivariate ($d=10$) & Gaussian SDE & $\sim 34$ min/window \\
Financial, multivariate ($d=10$) & DeepAR & $\sim 3.3$ min/window \\
Financial, multivariate ($d=10$) & NJ-SDE & $\sim 11$ min/window \\
Financial, multivariate ($d=10$) & Neural MJD & $\sim 0.3$ min/window \\
Financial, multivariate ($d=10$) & N-HiTS & $\sim 0.7$ min/window \\
Financial, multivariate ($d=10$) & DLinear & $\sim 0.1$ min/window \\
\bottomrule
\end{tabular}
\end{wraptable}

The additional cost of TS relative to the Gaussian SDE is consistent across
settings and reflects the overhead of sampling from the tilted L\'{e}vy measure
and estimating the jump contribution to the ELBO.  In the synthetic setting,
TS is approximately $5\times$ slower than the Gaussian SDE while remaining
highly stable across both drift families.  In finance, the univariate TS model
requires approximately $84$ minutes per rolling window, increasing to
approximately $247$ minutes in the $d=10$ multivariate setting.  This scaling is
consistent with the added cost of simulating and optimising heavy-tailed jump
structure in higher dimension.

Although TS is substantially slower than forecasting-oriented baselines such as
DLinear, N-HiTS, and Neural MJD, these models do not perform posterior
inference over a L\'{e}vy-driven latent path measure.  We therefore view the
runtime comparison with the Gaussian SDE as the most informative like-for-like
measure of the computational price of modelling heavy-tailed jump dynamics.

\vspace{-1mm}
\subsection{Optimisation details}
\vspace{-1mm}
Both TS and Gaussian SDE models use the same neural drift parametrisation
$f_t^\theta$: a one-hidden-layer MLP of width $32$.  We found that increasing
the drift width consistently degraded predictive performance, suggesting that a
small drift network acts as an effective regulariser by preventing the drift
from absorbing variability that should instead be attributed to the latent noise
process.

\paragraph{Tilted Stable architecture}
The TS variational posterior uses the quadratic parametrisation from
\cref{sec:quadratic}, with the coefficients $A_t$ and $B_t$ generated by MLPs of
width $256$ and $5$ hidden layers.  The temporal encoder uses $100$ learnable
reference times and embedding dimension $64$.  The lower-bound constant in the
reparametrisation of $A_t$ is fixed at $a_{\min}=0.001$.  The truncation threshold
is fixed at $\tau=0.01$ across all experiments.

\begin{remark}[Role and sensitivity of $\tau$]
Setting $\tau>0$ replaces the infinite-activity component of the prior with a finite
compound Poisson process, which is required for the variational framework
of~\cref{sec:variational_framework} to be well-posed without compensators.  The
heavy-tailed character of the prior is not affected: as established
in~\cref{dfn:stable}, the power-law tail and the tail index $\alpha$ are preserved
for any $\tau>0$.  The contribution of the discarded small jumps ($|y|<\tau$) is
well-approximated by Gaussian noise~\citep{AsmussenRosinski2001}, as discussed
in~\cref{rem:truncation}.  Consequently, results are insensitive to the precise
value of $\tau$ provided it is small relative to the typical jump scale; $\tau=0.01$
satisfies this condition in all experimental settings considered here.
\end{remark}

\paragraph{Discretisation and Monte Carlo estimation}
In the synthetic experiments, both TS and Gaussian SDE models are trained for
$3000$ optimisation iterations.  In the financial experiments, the univariate
TS model is also trained for $3000$ iterations per rolling window, while the
multivariate TS model uses $10000$ iterations because convergence is
substantially slower in the $d=10$ setting.  The Gaussian SDE is trained for
$3000$ iterations in all experiments, which we found sufficient for reliable
convergence.  All remaining baselines are likewise trained to convergence, which
in practice was achieved within $3000$ iterations.

In each iteration, the ELBO (or Gaussian analogue) is estimated using $500$
independent simulated latent paths, and the Euler discretisation uses $1000$
latent time steps between the start and end of the observed training window.
For TS, the KL jump integral is approximated using $1000$ Monte Carlo samples
per time step as described in \cref{sec:kl_approx}.  The Gaussian SDE uses the
same Euler discretisation and numbers of simulated paths.

\paragraph{Regularisation}
We apply $\ell_2$ regularisation to all trainable parameters for every model.
The regularisation scale is tuned manually as a model-specific hyperparameter
for all methods considered in the experiments.

\paragraph{Optimiser and gradient stabilisation}
The TS model is trained with RMSProp and no learning-rate decay.  The learning
rate is set to $10^{-4}$ in the synthetic and univariate financial experiments,
and to $10^{-3}$ in the multivariate financial experiment.  We found the absence
of decay important in practice: with decay, the combination of heavy-tailed
Monte Carlo noise and already conservative second-moment normalisation often led
to updates that became too small late in training.  Compared with Adam at
similar learning rates, RMSProp produced materially more stable optimisation,
whereas Adam frequently led to pronounced oscillations.  A plausible
explanation is that under heavy-tailed Monte Carlo noise, first-moment momentum
amplifies rare but very large gradient excursions instead of damping them,
while RMSProp's second-moment normalisation is more conservative.

The Gaussian SDE and all remaining benchmarks are trained with Adam using
learning rate $10^{-4}$ and exponential decay factor $0.95$.

In addition, TS uses a custom layerwise gradient rescaling step before the
RMSProp update.  For each parameter tensor $g$, we compute its robust scale via
the empirical $0.95$-quantile of $|g|$, and compare it with the root-mean-square
norm $\|g\|_2/\sqrt{|g|}$.  The tensor is then rescaled by
\[
    g \leftarrow {g}\,/\,{\max\!\left(1,\,
    \frac{\|g\|_2/\sqrt{|g|}}{q_{0.95}(|g|)+\varepsilon}\right)}.
\]
This transformation preserves the relative magnitudes of coordinates within a
layer, unlike coordinatewise clipping, but suppresses updates when the overall
layerwise norm is dominated by a small number of extreme entries.  Empirically,
this was important for stable optimisation of the TS objective, whose
heavy-tailed jump samples can induce occasional gradient spikes.  We interpret
the method as a robust compromise between no clipping and hard clipping: it
retains directional information while reducing the influence of rare,
disproportionately large gradients.

\begin{table*}[t]
\caption{Jump CRPS ($\sigma_\varepsilon = 0.10$) at increasing percentile thresholds
for each $\alpha$; lower is better.  Best result per metric in \textbf{bold}.}
\label{tab:synthetic_jump}
\centering
\scriptsize
\setlength{\tabcolsep}{2.5pt}
\resizebox{\textwidth}{!}{%
\begin{tabular}{l cc cc cc cc cc cc cc cc}
\toprule
\multicolumn{1}{c}{} & \multicolumn{8}{c}{Ornstein--Uhlenbeck}
& \multicolumn{8}{c}{Double well} \\
\cmidrule(lr){2-9}\cmidrule(lr){10-17}
& \multicolumn{2}{c}{$p_{90}$} & \multicolumn{2}{c}{$p_{95}$}
& \multicolumn{2}{c}{$p_{97.5}$} & \multicolumn{2}{c}{$p_{99}$}
& \multicolumn{2}{c}{$p_{90}$} & \multicolumn{2}{c}{$p_{95}$}
& \multicolumn{2}{c}{$p_{97.5}$} & \multicolumn{2}{c}{$p_{99}$} \\
\cmidrule(lr){2-3}\cmidrule(lr){4-5}\cmidrule(lr){6-7}\cmidrule(lr){8-9}
\cmidrule(lr){10-11}\cmidrule(lr){12-13}\cmidrule(lr){14-15}\cmidrule(lr){16-17}
$\alpha$ & G & TS & G & TS & G & TS & G & TS & G & TS & G & TS & G & TS & G & TS \\
\midrule
$1.1$ & $1.40$ & $\mathbf{1.15}$ & $1.70$ & $\mathbf{1.52}$ & $2.15$ & $\mathbf{2.10}$ & $2.46$ & $\mathbf{2.41}$ & $0.54$ & $\mathbf{0.31}$ & $0.66$ & $\mathbf{0.42}$ & $0.85$ & $\mathbf{0.60}$ & $1.35$ & $\mathbf{1.05}$ \\
$1.2$ & $0.90$ & $\mathbf{0.38}$ & $1.00$ & $\mathbf{0.46}$ & $1.12$ & $\mathbf{0.56}$ & $1.23$ & $\mathbf{0.67}$ & $0.42$ & $\mathbf{0.22}$ & $0.48$ & $\mathbf{0.28}$ & $0.58$ & $\mathbf{0.37}$ & $0.85$ & $\mathbf{0.62}$ \\
$1.3$ & $1.10$ & $\mathbf{0.97}$ & $1.63$ & $\mathbf{1.54}$ & $\mathbf{2.35}$ & $2.36$ & $\mathbf{3.01}$ & $3.20$ & $0.43$ & $\mathbf{0.23}$ & $0.48$ & $\mathbf{0.27}$ & $0.55$ & $\mathbf{0.35}$ & $0.77$ & $\mathbf{0.54}$ \\
$1.4$ & $0.53$ & $\mathbf{0.39}$ & $0.63$ & $\mathbf{0.53}$ & $0.82$ & $\mathbf{0.79}$ & $\mathbf{1.25}$ & $1.26$ & $0.43$ & $\mathbf{0.23}$ & $0.49$ & $\mathbf{0.29}$ & $0.60$ & $\mathbf{0.38}$ & $0.85$ & $\mathbf{0.60}$ \\
$1.5$ & $0.55$ & $\mathbf{0.39}$ & $0.66$ & $\mathbf{0.50}$ & $0.86$ & $\mathbf{0.71}$ & $1.14$ & $\mathbf{0.98}$ & $0.39$ & $\mathbf{0.20}$ & $0.42$ & $\mathbf{0.23}$ & $0.47$ & $\mathbf{0.29}$ & $0.61$ & $\mathbf{0.41}$ \\
$1.6$ & $0.39$ & $\mathbf{0.26}$ & $0.41$ & $\mathbf{0.29}$ & $0.44$ & $\mathbf{0.33}$ & $0.52$ & $\mathbf{0.38}$ & $0.40$ & $\mathbf{0.20}$ & $0.43$ & $\mathbf{0.23}$ & $0.47$ & $\mathbf{0.27}$ & $0.59$ & $\mathbf{0.38}$ \\
$1.7$ & $0.43$ & $\mathbf{0.31}$ & $0.47$ & $\mathbf{0.38}$ & $0.52$ & $\mathbf{0.43}$ & $0.58$ & $\mathbf{0.48}$ & $0.40$ & $\mathbf{0.21}$ & $0.43$ & $\mathbf{0.23}$ & $0.47$ & $\mathbf{0.28}$ & $0.56$ & $\mathbf{0.38}$ \\
$1.8$ & $0.84$ & $\mathbf{0.24}$ & $0.85$ & $\mathbf{0.26}$ & $0.91$ & $\mathbf{0.28}$ & $0.98$ & $\mathbf{0.34}$ & $0.41$ & $\mathbf{0.20}$ & $0.43$ & $\mathbf{0.23}$ & $0.47$ & $\mathbf{0.26}$ & $0.55$ & $\mathbf{0.34}$ \\
$1.9$ & $0.45$ & $\mathbf{0.24}$ & $0.48$ & $\mathbf{0.27}$ & $0.51$ & $\mathbf{0.31}$ & $0.54$ & $\mathbf{0.33}$ & $0.41$ & $\mathbf{0.21}$ & $0.43$ & $\mathbf{0.23}$ & $0.46$ & $\mathbf{0.26}$ & $0.52$ & $\mathbf{0.32}$ \\
\midrule
All $\alpha$ & $0.73$ & $\mathbf{0.48}$ & $0.87$ & $\mathbf{0.64}$ & $1.08$ & $\mathbf{0.87}$ & $1.30$ & $\mathbf{1.12}$ & $0.43$ & $\mathbf{0.22}$ & $0.47$ & $\mathbf{0.27}$ & $0.55$ & $\mathbf{0.34}$ & $0.74$ & $\mathbf{0.52}$ \\
\bottomrule
\end{tabular}
}
\vspace{-4mm}
\end{table*}
\begin{table*}[t]
\caption{Synthetic experiment results ($\sigma_\varepsilon = 0.05$), reporting means
over 50 realisations per $\alpha$.  Held-out CRPS and mean absolute parameter recovery errors
for the OU and double-well (DW) systems; lower is better.  Best result per metric in
\textbf{bold}.}
\label{tab:synthetic_app_main}
\centering
\scriptsize
\setlength{\tabcolsep}{3pt}
\begin{tabular}{l cc cc cc cc cc cc}
\toprule
& \multicolumn{6}{c}{Ornstein--Uhlenbeck}
& \multicolumn{6}{c}{Double well} \\
\cmidrule(lr){2-7}\cmidrule(lr){8-13}
& \multicolumn{2}{c}{CRPS $\downarrow$}
& \multicolumn{2}{c}{$|\hat{\theta}{-}\theta^*|$ $\downarrow$}
& \multicolumn{2}{c}{$|\hat{\mu}{-}\mu^*|$ $\downarrow$}
& \multicolumn{2}{c}{CRPS $\downarrow$}
& \multicolumn{2}{c}{$|\hat{\theta}_1{-}\theta_1^*|$ $\downarrow$}
& \multicolumn{2}{c}{$|\hat{\theta}_2{-}\theta_2^*|$ $\downarrow$} \\
\cmidrule(lr){2-3}\cmidrule(lr){4-5}\cmidrule(lr){6-7}
\cmidrule(lr){8-9}\cmidrule(lr){10-11}\cmidrule(lr){12-13}
$\alpha$ & G & TS & G & TS & G & TS & G & TS & G & TS & G & TS \\
\midrule
$1.1$ & $0.99$ & $\mathbf{0.63}$ & $5.13$ & $\mathbf{2.17}$ & $\mathbf{3.99}$ & $7.07$ & $0.41$ & $\mathbf{0.15}$ & $1.10$ & $\mathbf{0.83}$ & $1.93$ & $\mathbf{0.24}$ \\
$1.2$ & $0.81$ & $\mathbf{0.30}$ & $7.59$ & $\mathbf{2.55}$ & $\mathbf{3.16}$ & $3.80$ & $0.36$ & $\mathbf{0.13}$ & $\mathbf{0.92}$ & $1.10$ & $2.17$ & $\mathbf{0.29}$ \\
$1.3$ & $0.55$ & $\mathbf{0.38}$ & $9.01$ & $\mathbf{2.42}$ & $\mathbf{2.67}$ & $5.14$ & $0.34$ & $\mathbf{0.12}$ & $1.73$ & $\mathbf{1.39}$ & $3.35$ & $\mathbf{0.33}$ \\
$1.4$ & $0.42$ & $\mathbf{0.25}$ & $7.48$ & $\mathbf{1.91}$ & $\mathbf{2.60}$ & $5.12$ & $0.36$ & $\mathbf{0.12}$ & $\mathbf{1.42}$ & $1.62$ & $3.26$ & $\mathbf{0.31}$ \\
$1.5$ & $0.39$ & $\mathbf{0.23}$ & $7.18$ & $\mathbf{2.64}$ & $\mathbf{2.83}$ & $4.26$ & $0.33$ & $\mathbf{0.11}$ & $\mathbf{1.16}$ & $1.88$ & $3.42$ & $\mathbf{0.44}$ \\
$1.6$ & $0.35$ & $\mathbf{0.21}$ & $6.16$ & $\mathbf{2.41}$ & $\mathbf{3.54}$ & $3.74$ & $0.34$ & $\mathbf{0.12}$ & $\mathbf{0.98}$ & $1.82$ & $3.08$ & $\mathbf{0.30}$ \\
$1.7$ & $0.36$ & $\mathbf{0.22}$ & $9.61$ & $\mathbf{2.65}$ & $\mathbf{2.20}$ & $2.57$ & $0.36$ & $\mathbf{0.13}$ & $\mathbf{0.82}$ & $2.25$ & $2.91$ & $\mathbf{0.44}$ \\
$1.8$ & $0.80$ & $\mathbf{0.22}$ & $8.57$ & $\mathbf{2.51}$ & $\mathbf{2.05}$ & $3.22$ & $0.40$ & $\mathbf{0.15}$ & $\mathbf{1.02}$ & $1.26$ & $2.21$ & $\mathbf{0.29}$ \\
$1.9$ & $0.41$ & $\mathbf{0.20}$ & $6.98$ & $\mathbf{1.68}$ & $\mathbf{2.02}$ & $2.56$ & $0.39$ & $\mathbf{0.14}$ & $1.72$ & $\mathbf{1.32}$ & $2.37$ & $\mathbf{0.51}$ \\
\midrule
All $\alpha$ & $0.56$ & $\mathbf{0.29}$ & $7.52$ & $\mathbf{2.33}$ & $\mathbf{2.78}$ & $4.16$ & $0.37$ & $\mathbf{0.13}$ & $\mathbf{1.21}$ & $1.50$ & $2.75$ & $\mathbf{0.35}$ \\
\bottomrule
\end{tabular}%
\vspace{-3mm}
\end{table*}

\vspace{-2mm}
\section{Additional results on synthetic data}
\label{app:extra_synthetic}
\vspace{-2mm}

\begin{table*}[b]
\caption{Jump CRPS ($\sigma_\varepsilon = 0.05$) at increasing percentile thresholds
for each $\alpha$; lower is better.  Best result per metric in \textbf{bold}.}
\label{tab:synthetic_app_jump}
\centering
\scriptsize
\setlength{\tabcolsep}{2.5pt}
\resizebox{\textwidth}{!}{%
\begin{tabular}{l cc cc cc cc cc cc cc cc}
\toprule
\multicolumn{1}{c}{} & \multicolumn{8}{c}{Ornstein--Uhlenbeck}
& \multicolumn{8}{c}{Double well} \\
\cmidrule(lr){2-9}\cmidrule(lr){10-17}
& \multicolumn{2}{c}{$p_{90}$} & \multicolumn{2}{c}{$p_{95}$}
& \multicolumn{2}{c}{$p_{97.5}$} & \multicolumn{2}{c}{$p_{99}$}
& \multicolumn{2}{c}{$p_{90}$} & \multicolumn{2}{c}{$p_{95}$}
& \multicolumn{2}{c}{$p_{97.5}$} & \multicolumn{2}{c}{$p_{99}$} \\
\cmidrule(lr){2-3}\cmidrule(lr){4-5}\cmidrule(lr){6-7}\cmidrule(lr){8-9}
\cmidrule(lr){10-11}\cmidrule(lr){12-13}\cmidrule(lr){14-15}\cmidrule(lr){16-17}
$\alpha$ & G & TS & G & TS & G & TS & G & TS & G & TS & G & TS & G & TS & G & TS \\
\midrule
$1.1$ & $1.40$ & $\mathbf{1.15}$ & $1.70$ & $\mathbf{1.52}$ & $2.15$ & $\mathbf{2.10}$ & $2.46$ & $\mathbf{2.41}$ & $0.55$ & $\mathbf{0.27}$ & $0.67$ & $\mathbf{0.38}$ & $0.89$ & $\mathbf{0.57}$ & $1.41$ & $\mathbf{1.03}$ \\
$1.2$ & $0.90$ & $\mathbf{0.38}$ & $1.00$ & $\mathbf{0.46}$ & $1.12$ & $\mathbf{0.56}$ & $1.23$ & $\mathbf{0.67}$ & $0.46$ & $\mathbf{0.22}$ & $0.55$ & $\mathbf{0.30}$ & $0.72$ & $\mathbf{0.44}$ & $1.12$ & $\mathbf{0.80}$ \\
$1.3$ & $1.10$ & $\mathbf{0.97}$ & $1.63$ & $\mathbf{1.54}$ & $\mathbf{2.35}$ & $2.36$ & $\mathbf{3.01}$ & $3.20$ & $0.41$ & $\mathbf{0.19}$ & $0.47$ & $\mathbf{0.24}$ & $0.57$ & $\mathbf{0.34}$ & $0.82$ & $\mathbf{0.57}$ \\
$1.4$ & $0.53$ & $\mathbf{0.39}$ & $0.63$ & $\mathbf{0.53}$ & $0.82$ & $\mathbf{0.79}$ & $\mathbf{1.25}$ & $1.26$ & $0.43$ & $\mathbf{0.18}$ & $0.49$ & $\mathbf{0.24}$ & $0.60$ & $\mathbf{0.33}$ & $0.85$ & $\mathbf{0.55}$ \\
$1.5$ & $0.55$ & $\mathbf{0.39}$ & $0.66$ & $\mathbf{0.50}$ & $0.86$ & $\mathbf{0.71}$ & $1.14$ & $\mathbf{0.98}$ & $0.39$ & $\mathbf{0.17}$ & $0.44$ & $\mathbf{0.21}$ & $0.52$ & $\mathbf{0.29}$ & $0.72$ & $\mathbf{0.47}$ \\
$1.6$ & $0.39$ & $\mathbf{0.26}$ & $0.41$ & $\mathbf{0.29}$ & $0.44$ & $\mathbf{0.33}$ & $0.52$ & $\mathbf{0.38}$ & $0.37$ & $\mathbf{0.16}$ & $0.40$ & $\mathbf{0.19}$ & $0.46$ & $\mathbf{0.24}$ & $0.58$ & $\mathbf{0.36}$ \\
$1.7$ & $0.43$ & $\mathbf{0.31}$ & $0.47$ & $\mathbf{0.38}$ & $0.52$ & $\mathbf{0.43}$ & $0.58$ & $\mathbf{0.48}$ & $0.41$ & $\mathbf{0.18}$ & $0.44$ & $\mathbf{0.21}$ & $0.50$ & $\mathbf{0.27}$ & $0.64$ & $\mathbf{0.40}$ \\
$1.8$ & $0.84$ & $\mathbf{0.24}$ & $0.85$ & $\mathbf{0.26}$ & $0.91$ & $\mathbf{0.28}$ & $0.98$ & $\mathbf{0.34}$ & $0.43$ & $\mathbf{0.18}$ & $0.46$ & $\mathbf{0.21}$ & $0.51$ & $\mathbf{0.26}$ & $0.62$ & $\mathbf{0.38}$ \\
$1.9$ & $0.45$ & $\mathbf{0.24}$ & $0.48$ & $\mathbf{0.27}$ & $0.51$ & $\mathbf{0.31}$ & $0.54$ & $\mathbf{0.33}$ & $0.42$ & $\mathbf{0.18}$ & $0.45$ & $\mathbf{0.21}$ & $0.49$ & $\mathbf{0.26}$ & $0.60$ & $\mathbf{0.38}$ \\
\midrule
All $\alpha$ & $0.73$ & $\mathbf{0.48}$ & $0.87$ & $\mathbf{0.64}$ & $1.08$ & $\mathbf{0.87}$ & $1.30$ & $\mathbf{1.12}$ & $0.43$ & $\mathbf{0.19}$ & $0.49$ & $\mathbf{0.24}$ & $0.58$ & $\mathbf{0.33}$ & $0.82$ & $\mathbf{0.55}$ \\
\bottomrule
\end{tabular}%
}
\end{table*}
\cref{tab:synthetic_jump,tab:synthetic_app_main,tab:synthetic_app_jump} report the
full synthetic jump-CRPS results at $\sigma_\varepsilon = 0.10$ and the
full synthetic results at $\sigma_\varepsilon = 0.05$.  The qualitative findings
mirror those at $\sigma_\varepsilon = 0.10$: our model achieves lower CRPS than the
Gaussian baseline at every $\alpha$ in both systems, substantially reduces $\theta$
and $\theta_2$ recovery error, and leads on jump CRPS at all thresholds.  The
$\mu$ (OU) and $\theta_1$ (double well) recovery results follow the same pattern as
at $\sigma_\varepsilon = 0.10$: the Gaussian baseline performs better on these
parameters at most $\alpha$ values, reflecting the trade-off between the tilting
function and the global drift parameters discussed in~\cref{sec:exp_synthetic}.  With lower observation noise the jump CRPS
advantage for the double-well system is now fully resolved (no missing entries).
\Cref{fig:posterior_trajectories_all} shows posterior sample paths for nine additional realisations, further illustrating the consistent advantage of the tilted-stable model across seeds and both dynamical systems.

\vspace{-2mm}
\section{Additional financial results}
\label{app:extra_financial}
\vspace{-2mm}
\cref{tab:googl_main} reports results for GOOGL, using the same evaluation
protocol as NVDA (304 rolling windows, 2-day forecast horizon).  N-HiTS was not
evaluated on GOOGL.  The ranking is consistent with NVDA: our model achieves the
lowest CRPS and leads on jump CRPS at every threshold.  Reliability diagrams for GOOGL and the multivariate ($d{=}10$) setting are shown
in~\cref{fig:reliability_financial}.

\begin{table*}[h]
\caption{GOOGL financial forecasting results averaged over 304 evaluation windows;
lower is better for all metrics.  Best in \textbf{bold}.
$^\dagger$Deterministic model; CRPS equals MAE.
Jump CRPS is computed on price increments exceeding the indicated percentile threshold.
N-HiTS was not evaluated on GOOGL.}
\label{tab:googl_main}
\centering
\begin{tabular}{l c cccc}
& & \multicolumn{4}{c}{Jump CRPS $\downarrow$} \\
\cmidrule(lr){3-6}
Model & CRPS $\downarrow$ & $p_{90}$ & $p_{95}$ & $p_{97.5}$ & $p_{99}$ \\
\midrule
NJ-SDE             & $1.199{\pm}1.258$           & $1.609$          & $1.805$          & $2.312$          & $2.570$          \\
DLinear$^\dagger$  & $0.745{\pm}0.609$           & $1.257$          & $1.498$          & $1.919$          & $2.240$          \\
Gaussian SDE       & $0.610{\pm}0.670$           & $1.166$          & $1.369$          & $1.690$          & $1.827$          \\
Neural MJD         & $0.542{\pm}0.454$           & $0.999$          & $1.184$          & $1.455$          & $1.881$          \\
DeepAR             & $0.536{\pm}0.512$           & $1.002$          & $1.223$          & $1.529$          & $1.823$          \\
\textbf{TS (ours)} & $\mathbf{0.435}{\pm}0.419$ & $\mathbf{0.843}$ & $\mathbf{1.053}$ & $\mathbf{1.430}$ & $\mathbf{1.709}$ \\
\bottomrule
\end{tabular}
\vspace{-4mm}
\end{table*}

\begin{figure*}[p]
\centering
\includegraphics[width=\textwidth]{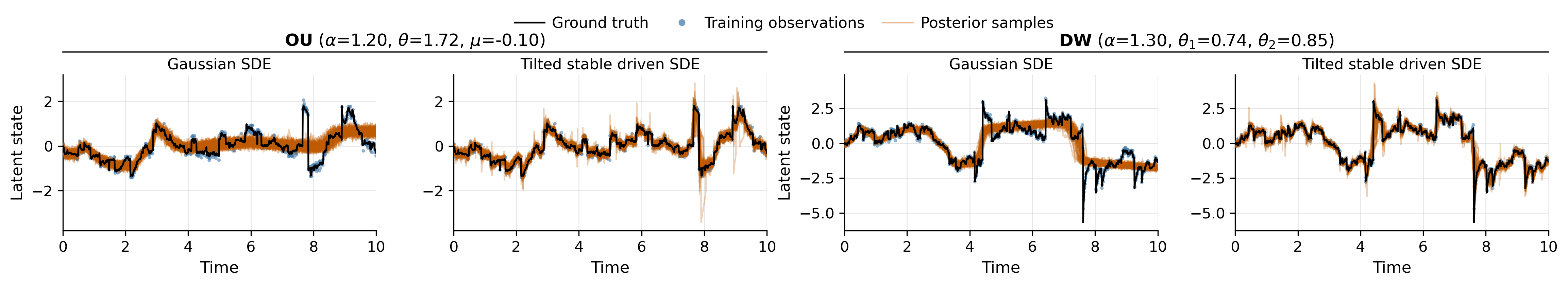}\\[-3mm]
\includegraphics[width=\textwidth]{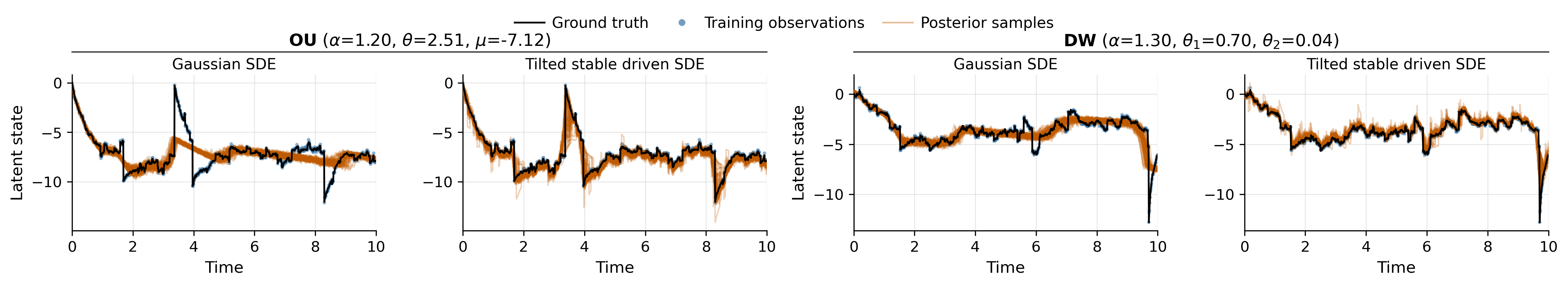}\\[-3mm]
\includegraphics[width=\textwidth]{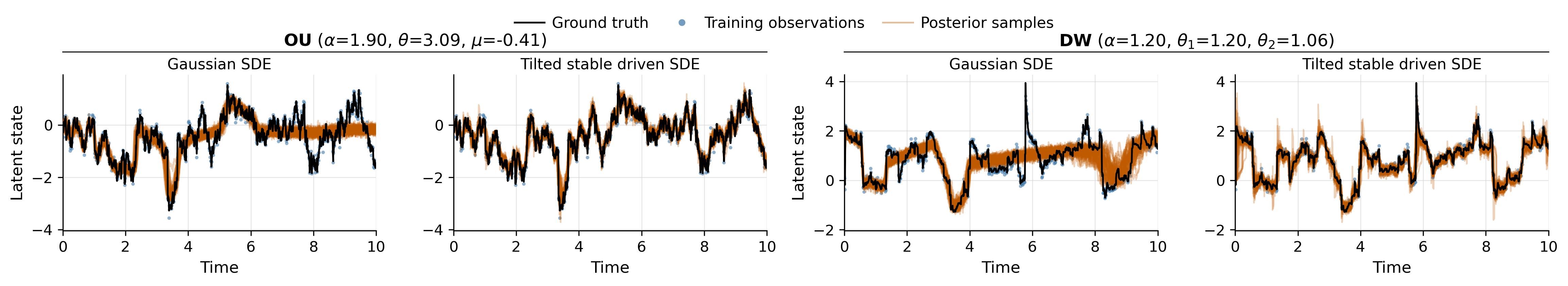}\\[-3mm]
\includegraphics[width=\textwidth]{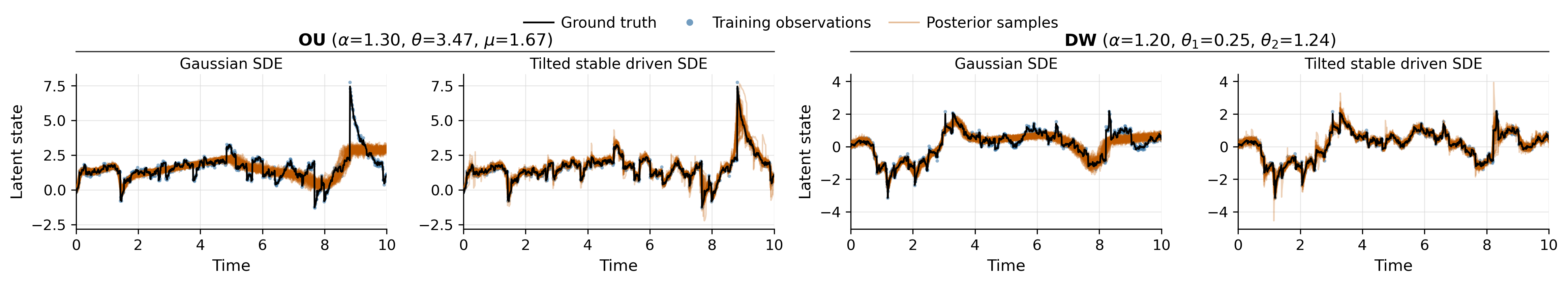}\\[-3mm]
\includegraphics[width=\textwidth]{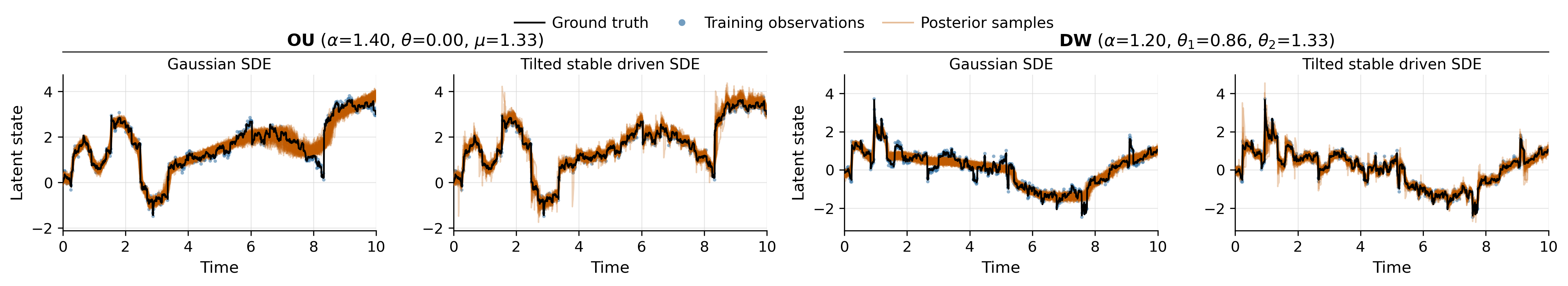}\\[-3mm]
\includegraphics[width=\textwidth]{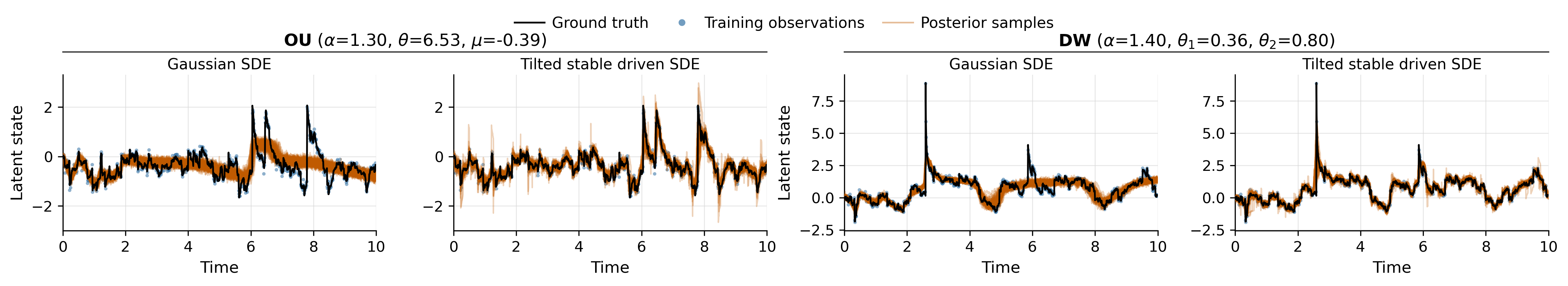}\\[-3mm]
\includegraphics[width=\textwidth]{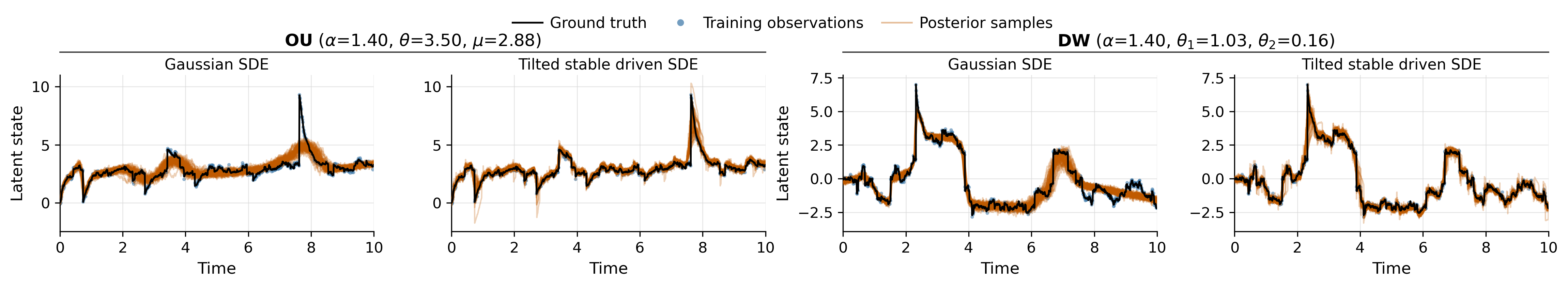}\\[-3mm]
\includegraphics[width=\textwidth]{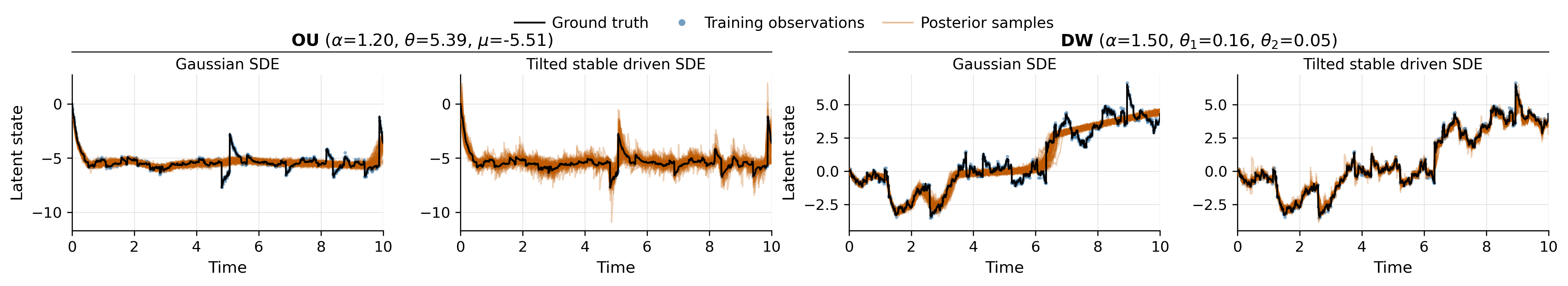}\\[-3mm]
\includegraphics[width=\textwidth]{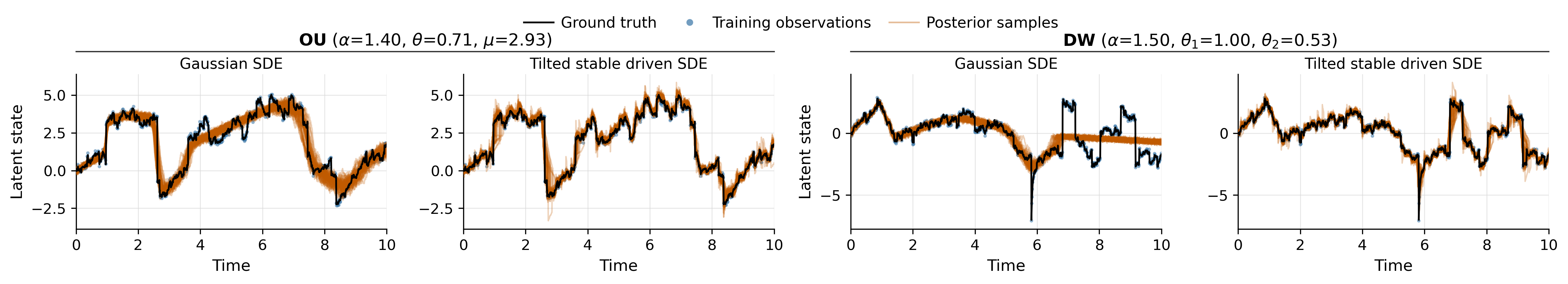}
\caption{Posterior sample paths for nine additional synthetic realisations, comparing the Gaussian SDE and the tilted-stable model on the OU (left pair) and double-well (right pair) systems. Each row corresponds to an independent realisation; the layout within each panel follows~\cref{fig:synthetic_posterior}.}
\label{fig:posterior_trajectories_all}
\end{figure*}

\end{document}